\documentclass[12pt]{article}
\usepackage{amsmath}
\usepackage{graphicx,psfrag,epsf}
\usepackage{enumerate}
\usepackage{natbib}
\usepackage{amssymb}
\usepackage{bigints}
\usepackage{cite}
\usepackage{algorithm}
\usepackage[noend]{algpseudocode}
\usepackage{amsthm}
\usepackage{bm}
\usepackage{booktabs}
\usepackage[labelfont=bf,textfont=md]{caption}
\usepackage{threeparttable}
\usepackage[colorlinks, citecolor = blue, linkcolor = black]{hyperref}
\usepackage{authblk}

\newcommand{\blind}{0}

\addtolength{\oddsidemargin}{-.75in}%
\addtolength{\evensidemargin}{-.75in}%
\addtolength{\textwidth}{1.5in}%
\addtolength{\textheight}{1.3in}%
\addtolength{\topmargin}{-.8in}%

\numberwithin{equation}{section}
\newtheorem{prop}{Proposition}

\begin{document}

\def\spacingset#1{\renewcommand{\baselinestretch}%
{#1}\small\normalsize} \spacingset{1}
\def\lf{\left\lfloor}   
\def\rf{\right\rfloor}
\def\intR{\int_{-\infty}^\infty}
\def\Cov{\text{Cov}}
\def\iid{\stackrel{iid}{\sim}}
\def\V{\text{Var}}
\def\E{\mathbb{E}}
\def\vp{\varphi}
\def\df{\mathcal{D}}
\def\bX{\bm{X}}
\def\bx{\bm{x}}


\if0\blind
{
  \title{\bf Locally Optimized Random Forests}
  \author[1,2]{Tim Coleman}
  \author[1]{Kimberly Kaufeld}
  \author[1]{Mary Frances Dorn}
  \author[2]{Lucas Mentch}
    \affil[1]{\footnotesize Los Alamos National Laboratory, Statistical Sciences Group, Los Alamos, NM 87544, USA}
    \affil[2]{\footnotesize University of Pittsburgh, Department of Statistics, Pittsburgh, PA 15215, USA}
  \maketitle
} \fi

\if1\blind
{
  \bigskip
  \bigskip
  \bigskip
  \begin{center}
  {\LARGE\bf A Distributional Approach to Random Forest Weighting, with Applications to Hurricane Outage Forecasting}
\end{center}
  \medskip
} \fi

\bigskip
\begin{abstract}
 \noindent Standard supervised learning procedures are validated against a test set that is assumed to have come from the same distribution as the training data. However, in many problems, the test data may have come from a different distribution. We consider the case of having many labeled observations from one distribution, $P_1$, and making predictions at unlabeled points that come from $P_2$. We combine the high predictive accuracy of random forests \citep{Breiman2001} with an importance sampling scheme, where the splits and predictions of the base-trees are done in a weighted manner, which we call \textit{Locally Optimized Random Forests}. These weights correspond to a non-parametric estimate of the likelihood ratio between the training and test distributions. To estimate these ratios with an unlabeled test set, we make the \textit{covariate shift} assumption, where the differences in distribution are only a function of the training distributions \citep{Shimodaira2000}. This methodology is motivated by the problem of forecasting power outages during hurricanes. The extreme nature of the most devastating hurricanes means that typical validation set ups will overly favor less extreme storms. Our method provides a data-driven means of adapting a machine learning method to deal with extreme events.
\end{abstract}

\noindent%
{\it Keywords:} random forest, importance sampling, semi-supervised learning, covariate shift

\spacingset{1}
\section{Introduction}
\label{sec:intro}

In machine learning, it is often assumed, implicitly or explicitly, that data used in training and data held out for prediction follow the same distribution. As such, models find an approximating function $\hat{f}$ that minimizes the \textit{global generalization error}, which for a loss function $L(\hat{f}(\bX), Y)$ is defined as $\E_{(\bX,Y)} L(\hat{f}(\bX), Y)$, where the expectation is taken with respect to the distribution of both $\bX$ and $Y$. However, it may be that
\[
\E_{(\bX,Y) \sim P_{\text{train}}} L(\hat{f}(\bX), Y) \neq \E_{(\bX,Y) \sim P_{\text{test}}} L(\hat{f}(\bX), Y)
\]
because $P_\text{train} \neq P_\text{test}$. As such, minimizing the left hand side may not yield an estimator that minimizes the second quantity. This idea of utilizing knowledge of where predictions will be sought as part of the training process is a natural fit in areas such as personalized medicine \citep{Liu2016}, for example, where physicians may often seek the most accurate predicted outcomes for particular patients, rather than a global minimizer.  \citet{Powers2015} make use of this notion of \emph{customized training} to cluster pixels from mass spectrometric images taken from lung cancer patients in order to fit more precise individual models to each cluster.

To formalize the above framework, consider covariates $\bX$ which take values in some $p$ dimensional space $\mathcal{X} \subset \mathbb{R}^p$ and a response Y which takes values in $\mathcal{Y} \subset \mathbb{R}$. Suppose we have two sets of data $\df$ and $\df'$ where $\df = (\bX_i, Y_i)_{i=1}^n \iid P_1$ and $\df' = (\bX_i', Y_i')_{i=1}^m \iid P_2$. where $P_i$ is a probability measure on $\mathcal{X} \times \mathcal{Y}$ for $i = 1, 2$. 
Furthermore, assume that the $Y_i'$ have been censored - and the goal is to attain accurate point estimates and prediction intervals for $Y_i'$. Now, suppose $P_1, P_2$ satisfy
\begin{equation} \label{eqn:dist_fact}
\begin{split}
    P_1(\bX,Y) &= P(Y|\bX)P^*_1(\bX) \\
    P_2(\bX,Y) &= P(Y|\bX)P^*_2(\bX)
\end{split}
\end{equation}
 so that the conditional distribution of the target is the same for both datasets - the change is in the covariate distribution. This model is commonly referred to as the \textit{covariate shift} model, and has been the source of intense research in recent decades \citep{Shimodaira2000, Sugiyama2005, Sugiyama2007, Reddi2015}. The issue arises when $P_2^*$ and $P_1^*$ concentrate mass in different areas of $\mathcal{X}$. In this case, standard guarantees about the effectiveness of many regression estimates of $P(Y|\bX = \bx)$ are invalid for $\bx$ in areas of low mass of $P_1^*$, even as $n\to\infty$. This is especially problematic if the low mass areas of $P_1^*$ have high mass in $P_2^*$. To resolve this, we propose learning a mapping between $P_1^*$ and $P_2^*$ by estimating the likelihood ratio function $\ell(\bX) = \frac{dP_2^*(\bX)}{dP_1^*(\bX)}$. Note we have assumed that $P_1^*$ and $P_2^*$ are absolutely continuous with respect to each other, i.e. for all measurable $A$, $P_1^*(A) > 0 \iff P_2^*(A) > 0$. 
In essence, we want to calculate the likelihood ratio, $\Lambda = \frac{dP_2^*}{dP_1^*}$, without necessarily specifying the form of $P_1^*$ and $P_2^*$. This precludes the use of typical parametric likelihood functions. Moreover, the high dimension of many problems means that the naive approach of estimating two densities will be unstable.

\subsection{A Motivating Example: Hurricane Power Outages}
One of the most damaging effects of hurricanes is the loss of power for many people in the storm track. Forecasting these outage counts is a direct way of quantifying the damage done by a hurricane, whereas meteorological forecasts, such as of windspeed and storm surge, tend to focus less on the human impact of the storm. Advances in machine learning have led to large improvements in predictive modeling of power outages that result from tropical storms and hurricanes. These models typically take in two sets of covariate information: (1) Information about the storm, such as windspeed expected in each study unit (2) information about each study unit, such as the soil types and demographics of the unit.  

The focus of this paper is to develop a method for accurately forecasting outages during storms across a wide variety of geographic extents, using only inputs available on such a geographic scale. Effectively, this means we cannot use information about the power-grid itself due to limited coverage, resolution, and types of information reported about each local grid. Several challenges are inherent to this problem:
\begin{description}
\item[Data Availability] The National Hurricane Center \citep{NHC2013} only provides full data for storms from 1995 onwards. 

\item[Rarity of Severe Events] Severe storms are, by definition, anomalous, and therefore are potentially underrepresented in the available data. Moreover, they may be overrepresented for particular areas of interest due to chance. 

\item[Interest in Severe Events] Forecasting less severe outages is inherently less useful to practitioners - often, the interest is in whether or not the forecasts for the big storms are accurate. 
\end{description}

Outage data is provided by the EAGLE-I system, which aggregates national information about the power-grid.  Power outages are clearly dynamic throughout the storm - in our dataset, outages are reported every 15 minutes for each county affected for the duration of the storm. For simplicity, we summarize the outage extent in the following way: (1) We record a running minimum outage $M_{i,t} = \min\{O_{i,k}: \ k \in [t, t+8)\}$, where $O_{i,k}$ is the time series of power outages in county $i$; (2) We let $Y_i = \log_{10}(\max_t M_{i,t})$. This quantity serves as our response variable, and is referenced with the predictors listed in \autoref{appdx:DetailedTab} and in \citet{Pasqualini2017}. Taking the logarithm of the outages helps to alleviate the heavytailed nature of the response, and further its interpretation can help quantify the magnitude of the expected effect \citep{Tokdar2010, Willoughby2007}.

In all, the data contains outage counts from 17 hurricanes and tropical storms between 2011 and 2017, for a total of 5015 observations, on 75 predictors. Given a county in a storm with covariates $\bx$, we want to estimate the conditional distribution $Y| \bX = \bx$ of county level outages, with emphasis on point estimates and prediction intervals. Moreover, we are typically interested in making forecasts for the entire affected region of a hurricane at once.

\section{Related Work}
Now, we describe related research regarding both the covariate shift model and the hurricane forecasting problem.
\subsection{Related Weighted Random Forest Work}
To fit a random forest into this framework, one solution would be to implement a weighted bootstrap in the resampling phase of the forest. Canonically, each observation has probability of being selected $p_i \equiv 1/n$, under the weighted scheme, $p_i \propto w_i$, where $w_i$ are some weights obtained \textit{a priori}. This approach was considered by \citet{Xu2016}, who proposed the following weighting scheme:
\begin{enumerate}
    \item Train a random forest on the original data set, in the process constructing $T_1, ..., T_B$. 
    \item Let $\bx_0$ be the test point of interest. Pass $\bx_0$ down the each tree. Let 
    \[
    S_{ij}(\bx_0) = I(\text{$\bx_0$ shares a terminal node with $\bX_i$ in $T_j$}) 
    \]
    and then define $D_i(\bx_0) = \frac{\sum_{j=1}^B S_{ij}(\bx_0)}{\sum_{i=1}^n \sum_{j=1}^B S_{ij}(\bx_0)}$. 
    \item In resampling, draw each observation with probability $p_i(\bx_0) = D_i(\bx_0)$ with replacement.
\end{enumerate}
This method bears structural similarity to our proposed method in that it weights the training observations by their similarity to the test point. Note that $D_i(\bx_0)$ is similar to the proximity metric \citep{Breiman2001, Friedman2001b} associated with random forests, which can be used as an adaptive distance metric. We note that this approach is well-suited for making predictions at a single point, i.e. where $P_2^*$ is a degenerate distribution with all of its mass concentrated at $\bx_0$. However, the weights used change from test point to test point, meaning that a new weighting scheme must be used for each point, and thus a new random forest must be trained for each test point, leading to a total of $|\df'| + 1$ forests needed. This may incur needless computational cost. A speed-up could be to cluster the test points and then apply the above scheme to the centroids of the clusters to get a weighting scheme for all points within the cluster. This is quite similar to the approach suggested by \citet{Powers2015}. In contrast to these procedures, we want to use distributional information about the covariates in our weighting. Moreover, for practical purposes, we seek a method with minimal additional computational overhead.

\subsection{Related Hurricane Outage Work}
\citet{Liu2005} used negative binomial regression to forecast outages during three storms during the 1990's. \citet{Guikema2012} found that generalized linear models lacked sufficient flexibility to accurately forecast power outages, and instead turned to non-parametric models, such as random forests and gradient boosting. More recently, \citet{Wanik2015} used a combined random forest, gradient boosting, and a single decision tree to forecast outages. \citet{He2017} used quantile regression forests \citep{Meinshausen2006} to provide prediction intervals, in addition to point estimates, for power outage forecasts. Quantile based methods may be preferable due to the heavy-tailed nature of power outage distributions - the averaging used in conditional mean estimation can lead to severe over/under estimates of power outages. Moreover, practitioners are likely more interested in a prediction interval than a confidence interval, as a prediction interval can inform evacuations/preparations. As such, much of the recent work in random forest inference, such as \citet{Wager2014, Mentch2016, Wager2017, Mentch2017, Coleman2019, Peng2019} is of less interest because of their focus on conditional mean estimation.

In our case, assume we train a model only on data from $P_1$, which may be data from several hurricanes in years prior, and then use it to make predictions about data that come from $P_2$, such as the outages for a yet-observed hurricane, whose characteristics may be quite different than storms previously recorded. \autoref{tab:unw_metrics} shows the result of this procedure for 6 hurricanes between 2012 and 2017. In particular, each model is tuned by minimizing the out-of-bag error for each parameter configuration, and the optimal model is then used to make predictions for the held-out storm. For example, to forecast Hurricane Arthur, we use data from the 16 other storms to train a random forest, which is then used to learn $f(\bx) = \E(Y |\bX = \bx)$, and $Q_\alpha(\bx) = F^{-1}_Y(\alpha|\bX = \bx)$ for $\alpha = 0.1, 0.5, 0.9$. Thus, the forest predicts the conditional mean, the conditional median, and a conditional 80 \% prediction interval. If the covariate structure was similar for each storm, we would anticipate seeing roughly similar error metrics across storms, especially seeing as the sample size is similar across each iteration. Rather, we see that three storms (Harvey, Nate, Matthew) have similar error metrics, while Arthur, Irma, and Sandy are much higher. It is not surprising that these are the storms that are most difficult to forecast - Irma and Sandy in particular were historically damaging storms \citep{IrmaSummary}. Perhaps more telling is that the prediction intervals for the higher error storms provide much poorer coverage. \citet{Meinshausen2006} showed that, under regularity assumptions, the conditional quantiles estimated by a quantile regression forest are consistent - as such, we would expect prediction intervals to maintain near the nominal coverage level. However, Harvey shows minor departures from this coverage level and Irma, Sandy, and Arthur shows a extreme departure from this level. %
To summarize performance, we also report a ``score" metric, which is defined as
\begin{equation} \label{eqn:score}
\text{Score} = \left(\frac{1}{\text{MAE}} + \frac{1}{\text{RMSE}} + \frac{4}{\text{IntWidth}}\right)\frac{\text{Covg}}{1-\alpha}
\end{equation}
so that the score is penalized for higher loss (MAE, RMSE), for wider intervals, and for lower coverage \%. This is not a formal loss function, but an attempt to quantify overall predictive performance. We note that Irma and Sandy have the lowest scores by far - again suggesting the difficulty in forecasting the damage from these storms.
\begin{table}[H]
\centering
\begingroup\footnotesize
\begin{tabular}{lllrrrrr}
  \hline
Storm & \texttt{mtry} & \texttt{nodesize} & MAE & RMSE & Covg & Interval Width & Score \\ 
  \hline
Matthew-2016 & 50 & 5 & 0.6269 & 0.7861 & 0.8898 & 2.6946 & 4.3021 \\ 
  Nate-2017 & 40 & 5 & 0.6727 & 0.8124 & 0.8759 & 2.5094 & 4.1960 \\ 
  Harvey-2017 & 50 & 5 & 0.7509 & 0.9026 & 0.7632 & 2.4214 & 3.4695 \\ 
  Arthur-2014 & 45 & 5 & 0.8498 & 1.0322 & 0.6862 & 2.2623 & 2.9839 \\ 
  Sandy-2012 & 40 & 10 & 0.9817 & 1.2197 & 0.5781 & 2.2376 & 2.3293 \\ 
  Irma-2017 & 45 & 5 & 1.1846 & 1.4044 & 0.3706 & 2.4051 & 1.3258 \\ 
   \hline
\end{tabular}
\caption{Tuned random forest results for 6 storms in the hurricane dataset. ``Covg" and ``Interval Width" refer to 80\% prediction intervals.}
\label{tab:unw_metrics}
\endgroup
\end{table}

\begin{figure}[H]
    \centering
    \includegraphics[width = .75\textwidth]{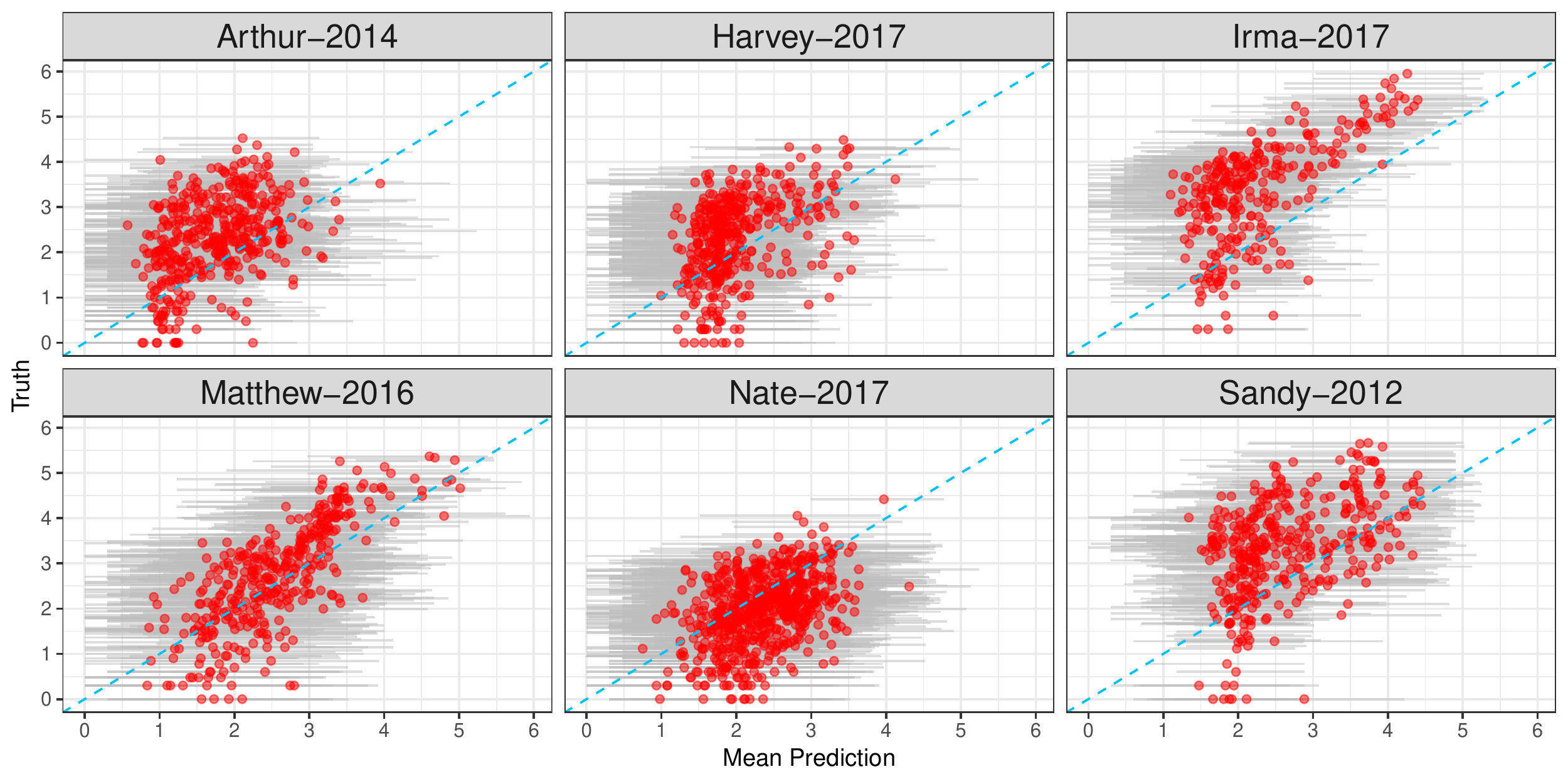}
    \caption{Fitted vs Predicted for each storm-holdout model. Blue line represents perfect prediction, and grey bars represent 80\% prediction intervals}
    \label{fig:plottedstorms}
\end{figure}

\section{Methods}
\label{sec:meth}

We begin with a brief summary of importance sampling. Importance sampling refers to weighting observations to either reduce the variance of some point estimate or to ``tilt" a sample observed from $P_1$ to be similar to $P_2$. 
As such, importance sampling seeks to weight each $X$ by how much it resembles a sample from $P_2$. The idea is to replace the observations $X$ with $X^* = Xw(X)$, for $w(x) = \frac{P_2(x)}{P_1(x)}$. We then let $\tilde{\mu} = \frac{1}{n}\sum_{i=1}^n X^*_i$. If, $P_1$ and $P_2$ are known, the $w(X_i)$ are already normalized (in the sense that they sum to 1). In our case, we know neither distribution, and can only calculate an un-normalized likelihood ratio between the two. As such, the self-normalized importance sampling estimate $ \tilde{\mu} = \frac{\sum_{i=1}^n w_i f(X_i)}{\sum_{j=1}^n w_j}$ is of more use.  The problem is to construct a random forest using data from $P_1$ as if the data had come from $P_2$. We propose a two stage procedure to solve this problem:
\begin{enumerate}
    \item First, we train a model to learn $\ell(\bx) = \frac{dP^*_2(\bx)}{dP^*_1(\bx)}$, the ratio of the data densities at $\bx$. We then estimate $\ell(\bX_1), ..., \ell(\bX_n)$ for each point in $\df$.
    \item We construct a randomized tree using an importance weighted criterion for both the splits and the predictions.
\end{enumerate}
Tree-based models are constructed by recursively partitioning the feature space. Partitioning takes a rectangular subspace $A$ and partitions it into two further rectangular subspaces $A_L, A_R$, where $A_L = \{ X \in A : X_i^{(j)} < z\}$, $A_R = A \setminus A_L$, and $x^{(j)}$ represents the $j^{\text{th}}$ coordinate of an observation. In the context of a continuous feature space (i.e. no categorical predictors), the quality of a split is assessed by:
\begin{multline} \label{eqn:CART}
        L(j, z) = \frac{1}{N_n(A)}\sum_{i=1}^n (Y_i - \bar{Y}_A)^2I(\bX_i \in A)\ - \\
         \frac{1}{N_n(A)}\sum_{i=1}^n \big(Y_i - \bar{Y}_{A_L}I(X_i^{(j)} < z) - \bar{Y}_{A_R}I(X_i^{(j)} \geq z) \big)^2I(\bX_i \in A)
\end{multline}
where $N_n(A)$ indicates the number of observations in the original sample that lie in region $A$ and $\bar{Y}_A$ is the sample mean of the response over all observations who lie in region $A$. This criterion is typically evaluated at all possible split points, and the split selected satisfies $(A_L, A_R) = \text{argmax}_{(j,z)} L(j,z)$. This process is initialized with $A = \mathcal{X}$, and then repeated recursively until the trees reach a specified depth or terminal node size. The trees output a rectangular partition, $A_1, ..., A_m$ where $m$ is the number of terminal nodes in the tree, and where $\mathcal{X} = \cup_{i=1}^m A_i$. Let $A^*(\bx)$ be the partition segment containing $\bx$, so that the prediction at $\bx$ is given by
\[
T(\bx; \df) = \sum_{i=1}^n \frac{I(\bX_i \in A^*(\bx))}{N_n(A^*(\bx))} Y_i.
\]
The construction of the trees above can be seen as repeated calculation of different statistical functionals. For a given probability measure $P$ supported on a set $A$, consider a rectangular partition of $A$ into $A_L$ and $A_R$, such that $A_L = \{ \bx \in A : x^{(j)} < z\}$ and $A_R = A \setminus A_L$. Define $P_L = \frac{1}{P(A_L)}P I(\bx \in A_L)$, normalizing so that $P_L$ is a valid probability measure. We can then define the following functionals
\begin{align*}
    T_1(P) &= \int y dP(y) \\
    \begin{split}
        T_{j,z}(P) &= \int (y - T_1(P))^2 dP(y) \ - \\
        &\hspace{8mm} \int\big[(y - T_1(P_L))^2I(\bx \in A_L) +(y - T_1(P_R))^2I(\bx \in A_R)\big] dP(y).
    \end{split}
\end{align*}
In the above, the functionals are calculated only with respect to the response coordinate - i.e. they are scalars, not vectors. For a given node $A$, define $\hat{P}_A = \frac{1}{N_n(A)}\sum_{i=1}^{n} \delta_{(\bX_i, Y_i)}I(\bX_i \in A)$, where $\delta_{(\bX_i, Y_i)}$ places mass 1 at the pair $(\bX_i, Y_i)$. 
We can redefine \autoref{eqn:CART} in terms of functionals of empirical distributions as
 \begin{equation*}
     L(j, z) = T_{j,z}(\hat{P}_A).
 \end{equation*}
The prediction stage can similarly be seen as $T(\bx; \df) = T_1(\hat{P}_{A^*(\bx)})$. The main innovation we propose here is to replace $\hat{P}_A$, which may estimate the training data distribution well, with another estimate $\tilde{P}_A$ that well approximates the distribution of the test data. Then, the functionals described above are calculated over $\tilde{P}_A$ for both the structure and prediction stages of the tree construction. In practice, we use the following formulation of $\tilde{P}_A$, which depends on a weight vector $\bm{w}$
\begin{equation}\label{eqn:imp_samp}
    \tilde{P}_{A, \bm{w}} = \sum_{i=1}^n \frac{w_iI(\bX_i \in A)}{\sum_{j=1}^n w_j I(\bX_j \in A)} \delta_{(\bX_i, Y_i)}.
\end{equation}
We thus replace the factor $1/N_n(A)$ with a value proportional to $w_i$. We use $\bm{w} = \{\ell(\bX_1),..., \ell(\bX_n)\}$, so that $T(\tilde{P}_{\bm{w}})$ is an approximation to $T(P_2)$ rather than $T(P_1)$. Tree construction proceeds by recursively maximizing $T_{j,z}(\tilde{P}_{A, \bm{w}})$ over each node, until the control parameters of the tree are met. 
As in the unweighted case, we can restrict the set of possible splits randomly at each node, such as only allowing $\texttt{mtry} < p$ features available for splitting, which can provide a forest variance reduction by decorrelating the trees. Then, the weighted tree predictions are given as
\[
T_{\bm{w}}(\bx ; \df) = T_1(\tilde{P}_{A^*(\bx), \bm{w}}).
\]

Finally, a forest is created by resampling the data many times and training a randomized tree on each data. The forest prediction, like in standard random forests (which estimate the conditional mean fuction) is given by
\[
m_{B,\bm{w}}(\bx ; \df)  = \frac{1}{B}\sum_{k=1}^B T_{\bm{w}}(\bx ; \df, \xi_k)
\]
where $\xi_k$ are iid randomization parameters determining the resamples and available features for splitting at each node. These procedures are summarised in \autoref{alg:weightedTree} for the weighted tree and \autoref{alg:weightedRF} for the entire forest.

\begin{algorithm}[ht]
\caption{Weighted Regression Tree}\label{alg:weightedTree}
\begin{algorithmic}[1]
\Procedure{WeightedTree}{$\df, \bm{w}, \xi, m_n$} \Comment{$\bm{w}$ are weights, $\xi$ is randomization, $m_n$ is maximum number of terminal nodes}
\State Set $\mathcal{P}_0 = \{\mathcal{X}\}$, $t=1$, and $d = 0$ \Comment{The root node $\mathcal{P}_0$ is the entire feature space, we start with $t=1$ terminal nodes, $d = 0$ is the depth}
\State For all $1 \leq k \leq \texttt{nrow}(\mathcal{D})$ set $\mathcal{P}_k = \varnothing$
\While{$t < m_n$}
\If{$\mathcal{P}_d = \varnothing$}
\State $d \leftarrow d + 1$
\Else
\State Set $A$ as the first element in $\mathcal{P}_d$ \Comment{$P_A$ is the within-node distribution}
\State Let $\mathcal{M}_{\xi,d} \subset \{1,.., p\}$ be features available for splitting 
\State Evaluate $T_{j,z}(\tilde{P}_{A,\bm{w}}) \ \forall z $ and for all $j \in \mathcal{M}_{\xi,d}$
\State Set $A_L^* = \{ \bX \in A : X^{(j^*)} < z^*\}$ where $z^*, j^* = \text{argmax}_{j,z}(P_A)$ and set $A_R^* = A \setminus A_L^*$
\State Set $\mathcal{P}_d \leftarrow \mathcal{P}_d \setminus \{A\}$ and $\mathcal{P}_{d+1} \leftarrow \mathcal{P}_{d+1}\cup \{A^*_L\} \cup \{A^*_R\}$
\State Set $t \leftarrow t + 1$
\EndIf
\EndWhile
\State Prediction at point $\bx$ is made by $T_1(\tilde{P}_{A^*(\bx), \bm{w}})$ where $A(\bx) \in \mathcal{P}_d$ is the node containing $\bx$
\EndProcedure
\end{algorithmic}
\end{algorithm}

\begin{algorithm}[ht]
\caption{Locally Optimized Random Forest}\label{alg:weightedRF}
\begin{algorithmic}[1]
\Procedure{LocalRF}{$\df_{\text{TRAIN}}, \df_{\text{TEST}}, \texttt{REPLACE}, k_n, B$}
\State Apply method of \citet{Kanamori2009} to generate $\hat{\ell}_i$
\For{$k \in \{1, ..., B\}$}\Comment{$B$ is total number of trees to be trained}
\If{ \texttt{REPLACE}}
\State Draw $k_n$ observations w/ replacement
\Else
\State Draw $k_n$ observations w/o replacement
\EndIf
\State Let $\df_{k, k_n}$ be the resampled data, and $\bm{\ell}_{k, k_n}$ be the resampled weights 
\State Set $T_k \leftarrow \textsc{WeightedTree}(\df_{k, k_n}, \bm{\ell}_{k, k_n}, \xi_k)$ \Comment{$\xi_k$ controls other randomization}
\EndFor
\State \textbf{return} $\{T_1, ..., T_B\}$ \Comment{Collection of trees}
\EndProcedure
\end{algorithmic}
\end{algorithm}
\subsection{Weighted Quantile Regression}
Recall that a major interest in the forecasting problem is the inclusion of prediction intervals, and quantile regression forests \citep{Meinshausen2006} provide a natural means of non-parametric quantile regression. As such, we propose a means of using the importance forest procedure for quantile regression. As \citet{Meinshausen2006} notes, a random forest estimate can be reformulated as a weighted mean of the observations, as opposed to the sample mean of the trees. For a prediction point $\bx$ and a point in the training set $\bX_i$, a decision tree (constructed using prior weights $\bm{w}$) drawn with parameter $\xi$ induces the following weights
\[
t_i(\bx; \xi, \bm{w}) = I(\bX_i \in A_\xi^*(\bx))\frac{w_i}{\sum_{j=1}^n w_j I(\bX_j \in A_\xi^*(\bx))}.
\]
Then, given $B$ trees trained using randomization parameters $\xi_1,..., \xi_B$, we can define the random forest weights by
\[
r_{i,B}(\bx; \bm{w}) = \frac{1}{B}\sum_{k=1}^B t_i(\bx; \xi_k, \bm{w}).
\]
Following \citet{Meinshausen2006}, we can then use these weights to get an estimate of $F(y | \bX = \bx) = P(Y \leq y | \bX = \bx)$ as 
\[
\tilde{F}_{\bm{w}}(y | \bX = \bx) = \sum_{i=1}^n r_{i,B}(\bx; \bm{w}) I(Y_i \leq y).
\]
We can similarly define a conditional quantile function $\tilde{Q}_{p,\bm{w}}(\bx) = \inf\{ y : \ \tilde{F}_{\bm{w}}(y | \bX = \bx) \geq p \}$. Note that $\tilde{F}_{\bm{w}}(y | \bX = \bx)$ only takes on $n + 1$ values, so evaluation of $\tilde{Q}_{p, \bm{w}}(\bx)$ amounts to a grid search over these $n+1$ values. For a provided quantile, $p$, we see that $\tilde{Q}_{p, \bm{w}}(\bx) = Y_{k^*}$, where $k^* = \min_k \sum_{i=1}^k r_{(i), B}(\bx; \bm{w}) \geq p$, where the notation $r_{(i),B}(\bx ; \bm{w})$ indicates that the RF weights are now ordered by magnitude of the response value, i.e. $i > k \iff Y_i \geq Y_k$.
 
\subsection{Learning \texorpdfstring{$\ell$}{l}}
\label{subsec:lrt_learn}
Each element of the weight vector $\ell(\bX_i)$ is a ratio of densities of two different covariate distributions. These densities are unknown and are over high dimensional feature space. As such, many traditional density estimation tools are unlikely to be effective. We describe two candidate procedures for density estimation, probabilistic classification and kernel moment matching. We argue that the probabilistic classification approach, while simple to implement, may be unstable in high dimensions. 

\subsubsection{Probabilistic Classification}
We can use the favorable properties of tree based density estimates in high dimensions to learn $\ell$. The algorithm of \citet{Breiman2001} can be used for unsupervised learning, by returning measures of adaptive distance between observations. Crucially, this procedure relies on the creation of a synthetic covariate dataset, and then learning the probability that a particular observation came from the true or synthetic dataset. The synthetic dataset is created by drawing $n$ observations (with replacement) uniformly and independently from each covariate, destroying any dependencies between the observations. The idea is that if there is high-dimensional structure, the model should easily discriminate between the two datasets. In the covariate shift literature, this procedure is referred to as a \textit{probabilistic classification} method, as it transform the density ratio estimation problem into a classification problem \citep{Barber2019}.

To formalize the above, we impose another assumption about the distribution of test and training. For all $\bX_i \in \{\df, \df'\}$, we assume that $\bX_i \iid P(\bX_i) = \alpha P_1^*(\bX_i) + (1-\alpha)P_2^*(\bX_i)$, where $\alpha \in (0,1)$. In the canonical machine learning context, $\alpha \equiv 1$ (without loss of generality), which covers the situation where the test and training covariates have the same distribution. We introduce the synthetic response $Z = I(X \sim P_2^*)$. For every observation in $\bX_i \in \{\df, \df'\}$, this amounts to $Z_i = I(\bX_i \in \df')$, where $I(\cdot)$ is an indicator function. We then want to learn $P(Z = 1 |\bX)$, i.e. the probability that an observation came from one dataset or another. Note that this relies on the density discrepancy between $P_1^*$ and $P_2^*$, which may be a nonlinear function of complex interactions between each feature. Then, it follows that
\[
P(Z_i = 1 | \bX_i) = \frac{P(\bX_i | Z_i =1)P(Z_i = 1)}{P(\bX_i)} = \frac{dP_2^*(\bX_i)P(Z_i = 1)}{P(\bX_i)} 
\]
and thus
\[
\frac{P(Z_i = 1 | \bX_i)}{P(Z_i = 0 | \bX_i)} = \frac{\frac{dP_2^*(\bX_i)P(Z_i = 1)}{P(\bX_i)} }{\frac{dP_1^*(\bX_i)P(Z_i = 0)}{P(\bX_i)} } = \ell(\bX_i) \frac{P(Z_i = 1)}{P(Z_i = 0)}.
\]
We only require our importance sampling weights to be proportional to $\ell(\bX_i)$, so that any information placed in the marginal distribution of $Z_i$ is accounted for in the normalization. Using the random forest estimates $\hat{\pi}_i$ of $\pi_i = P(Z_i = 1 | \bX_i)$, we let $w(\bX_i) := w_i = \frac{\hat{\pi}_i}{1 - \hat{\pi}_i}$ be our estimate of the appropriate weighting scheme. To ameliorate dividing by 0, in practice, we add a small constant $\delta$ to both the numerator and denominator. 
We now provide an approximate error estimate of the classifier-inverted ratio weight. We can write $\hat{\pi}_i = \pi_i + \epsilon_i$ for some error term $\epsilon_i$ which we assume has finite variance $\sigma^2_\epsilon$. Then, the ratio weights are given by
\[
w_i = \frac{\hat{\pi}_i }{1 - \hat{\pi}_i} = \frac{\pi_i + \epsilon_i}{1 - \pi_i - \epsilon_i} := g_i(\epsilon_i)
\]
where $g_i$ is a differentiable function with derivative $g'_i(x) = (1 - \pi_i - x)^{-2}$. Then, assuming that $\epsilon_i$ satisfies both a central limit theorem and a law of large numbers (asymptotic in $N$) we see that
\[
\text{Var}\left(\sqrt{N} w_i\right) \approx g'\left(\E \epsilon_i\right)^2 \sigma^2_\epsilon = \frac{\sigma^2_\epsilon}{(1 - \pi_i - \E \epsilon_i)^4}
\]
so that if the asymptotic bias ($\E \epsilon_i$) is small or 0, the variance of the weight estimates scales as $O\left((1-\pi_i)^{-4}\right)$. This can lead to severe instability in the probabilistic classifier estimate, if the underlying conditional probabilities are close to 1. The effect of this instability is shown in \autoref{fig:DR_comparison}, where even in a simple univariate case, the probabilistic classifier picks up on the general trend of the density ratios, but has high variance. As such, an alternative method of estimating the likelihood ratio weights is needed. 

\begin{figure}[t]
    \centering
    \includegraphics[width = .75\textwidth]{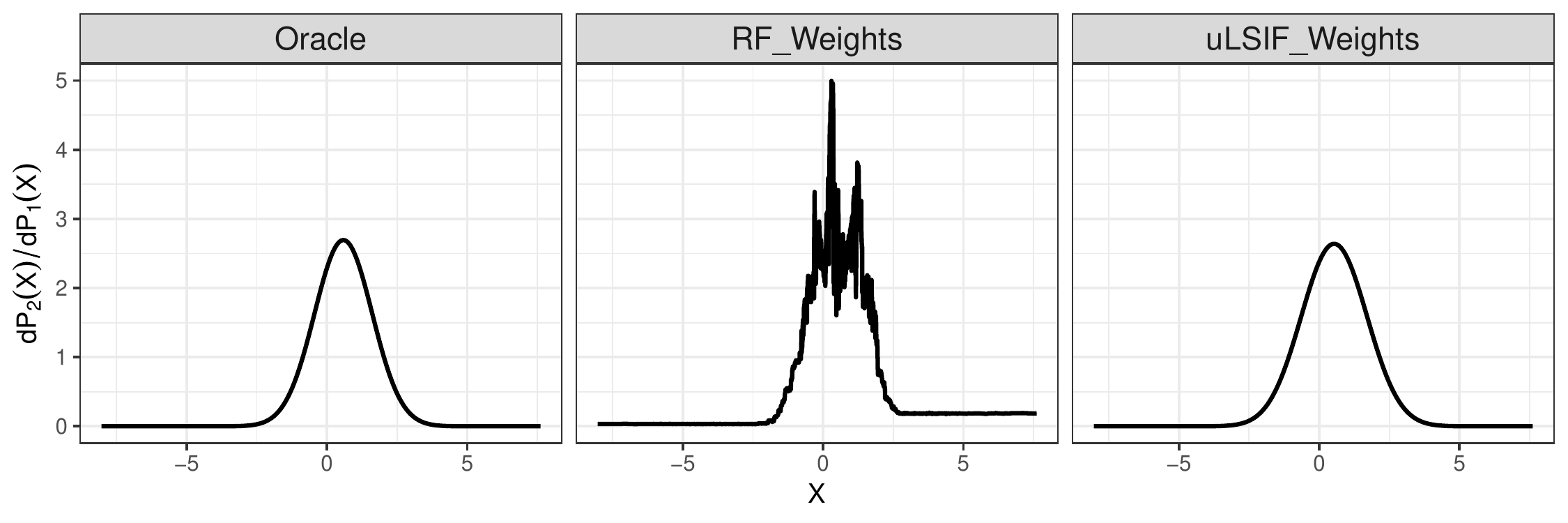}
    \caption{Comparison of estimated density ratios between an inverted random forest classifier and the uLSIF method of \citet{Kanamori2009}. In this example, $P_1^*(X) = \mathcal{N}(0, 2.5^2)$ and  $P_2^*(X) = \mathcal{N}(0.5, 0.95^2)$, and models were learned with $n = 1500$ examples from each. In the above example, the RF attained RMSE of $0.355$ while the uLSIF method attained an RMSE of $0.139$.}
    \label{fig:DR_comparison}
\end{figure}

\subsubsection{Least Squares Importance Fitting}
Another method for estimating density ratios that has been explored is Least Squares Importance Fitting, developed by \citet{Kanamori2009}. The approach essentially reduces down to modelling the ratio as a linear output
\[
\ell(\bX) = \sum_{k=1}^b \alpha_k K_\sigma(\bX, \bx_k)
\]
where $\alpha_k \geq 0$ for all $k$, $\bx_k$ are centroid points, $\sigma$ is a bandwidth parameter, and $K_\sigma(\cdot, \cdot)$ is a Gaussian kernel. The authors recommend using the points in $\df'$ as the centroids. The model fitting proceeds by minimizing the objective function
\begin{equation} \label{eqn:restr}
L_\lambda(\alpha) = \left[\frac{1}{2n}\alpha^T \left[ \sum_{i=1}^nK_\sigma(\bX_i, \bx_k)K_\sigma(\bX_i, \bx_j)\right]_{k,j = 1}^{k,j = n}\alpha - \left[ \frac{1}{m}\sum_{k=1}^{m} K_\sigma(\bX_i, \bX_k)\right]_{i=1,...,n}^T\alpha + \lambda ||\alpha||_1  \right]
\end{equation}
where $\lambda$ is a tuning parameter, and the first term uses observations from the training data, while the second term uses observations from the test data. The tuning parameters ($\sigma, \lambda$), are selected by leave one out cross validation, whose analytic form is provided by \citet{Kanamori2009}. Minimizing \autoref{eqn:restr} subject to $\hat{\alpha}_k \geq 0$ for all $k$ can be computationally expensive, so in practice, \citet{Kanamori2009} recommends using an unconstrained approximation which is provably close to the constrained estimates. Then, ratio estimates are made by calculating $w(\bX) = \sum_{k=1}^m \hat{\alpha}_k K_\sigma(\bX, \bx_k)$. This approach inherits many of the favorable properties of  regularized least squares models, and is computationally efficient. The efficacy of this model is demonstrated in the rightmost panel of \autoref{fig:DR_comparison}, where the learned density ratio is near identical to the oracle weights.

\subsubsection{Weight Regularization}
In practice, $p$ is large in many problems. Thus, the weights are likely to be either quite small or quite large, inappropriately concentrating mass on only a few points. A typical quantifier of this effect is \textit{effective} sample size, which is defined as
\[
n_{\text{eff}} = \frac{\left(\sum_{i=1}^n w(\bX_i) \right)^2}{\sum_{i=1}^n w(\bX_i)^2}.
\]
To understand effective sample size, it is useful to look at the two extreme scenarios: 1) If all the weights are uniform, then $n_{\text{eff}} = n$ and 2) if the weights are a 1-hot vector, i.e. all weights are 0 except for a single entry, then $n_{\text{eff}} = 1$. Thus, the more evenly distributed the weights, the higher $n_{\text{eff}}$, so that effective sample size is an estimate of the equivalent sample size if all the data came from $P_2$. 

Under large magnitude covariate shifts, the relative influence of certain points in the training set can grow, meaning a low effective sample size and model instability \citep{Shimodaira2000}. To combat this, a common technique is to introduce a smoothing parameter $\lambda \in (0, 1]$, and to use weights $w(\bX; \lambda) = w(\bX)^\lambda$, which has the effect of shrinking all the weights, but shrinking the large weights more severely \citep{Sugiyama2007}. Selecting $\lambda$ via typical procedures such as cross-validation is challenging, because such procedures suffer from the same flaws illustrated in \autoref{sec:intro}. As such, we instead suggest the following heuristic. First, fix $n_0 \in (1, n)$, typically as a fraction of the overall sample size. Then, select $\lambda$ such that $n_\text{eff} = n_0$ when using weights $w(\bX)^\lambda$. This is equivalent to finding the roots of 
\[
f(\lambda) = \frac{\left(\sum_{i=1}^n w(\bX_i)^\lambda \right)^2}{\sum_{i=1}^n w(\bX_i)^{2\lambda}} - n_0
\]
which can be calculated quickly in many software packages. In works such as \citet{Sugiyama2007}, the authors recommend using importance weighted cross validation to select $\lambda$. However, this weighted cross validation is calculated only with respect to $\lambda = 1$, so that the cross validation estimate may inherit some of the undesirable properties of non-regularized weights, e.g. instability and high variance. As such, we suggest \textit{a priori} selection of $\lambda$, which is then used in estimation of both the weighted random forest and the weighted model. 
\subsection{Tuning the model} \label{subsec:tuning}
A key part of any predictive analysis is estimation of generalization error. Typically, this is done through methods such as repeated training/test splits, cross validation, or bootstrapping. These procedures repeatedly use uniform resampling to create training/test splits, and loss is calculated by making predictions on the held out set using a model trained on the training split. The hyper-parameters associated with the optimal score are then recorded, and a final model is trained with those parameters. This framework is appropriate when the test set and training set are assumed to have come from the same distribution - a random sample from the empirical distribution is an unbiased approximation to a random sample from the population. The same is not true under covariate shift, but we would still like a method of tuning a model, with the goal of minimizing the generalization error under $P_2$, as in \citet{Sugiyama2007}.

Random forests (and other bagging methods) have an additional means of estimation of the generalization error: the out-of-bag (oob) error. Each base learner is trained on only a fraction of the unique instances in the training set, creating a natural training/test split. For each split, the oob error is usually calculated as the mean squared error on the held out set, and the overall oob error is given averaging across resamples. \citet{Friedman2001b} note that the oob error can be reformulated as the error associated with taking each observation $(\bX_i, Y_i)$ and constructing a random forest using only trees in which $(\bX_i, Y_i)$ did not appear in the sample, and then recording the loss when making a prediction at $\bX_i$ using this forest. Let $B_i =\sum_{j=1}^B I(\bX_i \notin \df^*_j)$, i.e. the number of resamples that do not contain $(\bX_i, Y_i)$, so that we can write the oob error as
\begin{equation}\label{eqn:oob2}
    \text{OOB}_{m,B} = \frac{1}{n}\sum_{i=1}^n \bigg(\frac{1}{B_i}  \sum_{k=1}^{B} T(\bX_i; \xi_k)I(\bX_i \notin \df^*_k) - Y_i\bigg)^2.
\end{equation}
Because $\lim_{B\to\infty} B_i = \infty$, we can construct an infinite random forest for each point, so that by the law of large numbers, $\lim_{B\to\infty} \text{OOB}_{m,B} = \frac{1}{n}\sum_{i=1}^n \left( \E_\xi T(\bX_i; \xi, \df_{-i}) - Y_i\right)^2$. Thus, as $B\to\infty$, \autoref{eqn:oob2} approaches the $n$-fold cross validation error, which is then used as an estimate of the generalization error of the forest. Similarly, we define the weighted oob error as
\begin{equation}\label{eqn:oob3}
    \text{OOB}^{\bm{w}}_{m,B} = \frac{1}{\sum_{j=1}^n w_j}\sum_{i=1}^n w_i\bigg(\frac{1}{B_i}  \sum_{k=1}^{B} T_{\bm{w}}(\bX_i; \xi_k)I(\bX_i \notin \df^*_k) - Y_i\bigg)^2.
\end{equation}
In what follows, we let $m_{B_i}(\bX_i) = \frac{1}{B_i}  \sum_{k=1}^{B} T_{\bm{w}}(\bX_i; \xi_k)I(\bX_i \notin \df^*_k)$ be the random forest trained using only trees that did not see observation $(\bX_i, Y_i)$. The utility of this weighted metric is a result of the following proposition.
\begin{prop} \label{prop:prop1}
Let $\{Z_i\}_{i=1}^N \iid Bernoulli(\alpha)$, and let $ (\bX_i, Y_i)_{i=1}^{n+m} | Z_i \iid Z_i P_2 + (1-Z_i)P_1$, where $P_1$ and $P_2$ satisfy \autoref{eqn:dist_fact}. Define $m = \sum_{i=1}^N Z_i$. Assume that $Y_i \geq 0$ almost surely, $\sup_{\bx}\E(Y^4 | \bX = \bx) < K$ for some constant $K$, and that \[
\rho^*_n = \max_{k = 1,2}\max_{i\neq j} Cor_{P_k}\bigg[(m_{B_i}(\bX_i) - Y_i)^2, (m_{B_j}(\bX_j) - Y_j)^2\bigg] \to 0
\]
as $n\to\infty$. Further, assume that for all $\bx \in \mathcal{X}$, $w_N(\bx)$ is consistently proportional to the likelihood ratio, $\ell(\bx) = \frac{dP^*_2(\bx)}{dP^*_1(\bx)}$, so that $w_N$ satisfies
\[
w_N(\bx) = c \frac{dP^*_2(\bx)}{dP^*_1(\bx)} + \epsilon_N(\bx)  \ \forall \ \bx \in \mathcal{X}
\]
where $c$ is a constant that does not depend on $\bx$, and $\epsilon_N(\bx)$ is a sequence of random variables satisfying $P(\sup_{\bx}|\epsilon_N(\bx)| < \eta_N) = 1$, where $\eta_N \to 0$ as $N\to\infty$. Let $\theta_{P_2} = \E_{P_2}(\lim_{B\to\infty}\text{OOB}_{m,B})$. Then, as $B, n \to\infty$
\[
\text{OOB}^{\bm{w}}_{m,B} \stackrel{p}{\to} \theta_{P_2}.
\]
\end{prop}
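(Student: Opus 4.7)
The plan is to decompose $\text{OOB}^{\bm{w}}_{m,B} = N_n^B/D_n$ with
\[
N_n^B = \frac{1}{n}\sum_{i=1}^n w_i\bigl(m_{B_i}(\bX_i)-Y_i\bigr)^2, \qquad D_n = \frac{1}{n}\sum_{j=1}^n w_j,
\]
and then to peel off, one at a time, the three sources of randomness: the tree-level Monte Carlo noise from the $\xi_k$'s, the weight-estimation error $\epsilon_N$, and the sampling variability of the training pairs $(\bX_i, Y_i)$. The final step is Slutsky's theorem applied to the ratio of the two pieces.

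First I would pass to $B = \infty$. Because $B_i \to \infty$ almost surely, a conditional law of large numbers in $\xi$ gives $\frac{1}{B_i}\sum_k T_{\bm{w}}(\bX_i;\xi_k)I(\bX_i \notin \df^*_k) \to m_\infty(\bX_i) := \E_\xi[T_{\bm{w}}(\bX_i;\xi,\df_{-i})]$, and the hypothesis $\sup_\bx \E(Y^4\mid\bX=\bx) < K$ together with $Y \geq 0$ supplies uniform integrability of the squared residuals, so $N_n^B \to N_n^\infty := \frac{1}{n}\sum w_i(m_\infty(\bX_i)-Y_i)^2$ in $L^1$. Next I would substitute the weight decomposition $w_N(\bx) = c\,\ell(\bx) + \epsilon_N(\bx)$; since $\sup_\bx|\epsilon_N(\bx)| < \eta_N \to 0$, the error contributions to $N_n^\infty$ and $D_n$ are bounded by $\eta_N \cdot \frac{1}{n}\sum(m_\infty(\bX_i)-Y_i)^2$ and $\eta_N$ respectively, and are $o_p(1)$ under the moment hypothesis. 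It therefore suffices to show
\[
\frac{\tilde N_n}{\tilde D_n} := \frac{\frac{1}{n}\sum_i \ell(\bX_i)(m_\infty(\bX_i)-Y_i)^2}{\frac{1}{n}\sum_j \ell(\bX_j)} \stackrel{p}{\to} \theta_{P_2}.
\]

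The covariate-shift factorization in \eqref{eqn:dist_fact} supplies the correct means by change of measure: $\E_{P_1}[\ell(\bX)(m_\infty(\bX)-Y)^2] = \E_{P_2}[(m_\infty(\bX)-Y)^2] = \theta_{P_2}$ and $\E_{P_1}[\ell(\bX)]=1$. A Chebyshev argument then controls the fluctuations: the diagonal variance contribution is $O(1/n)$ under the fourth-moment bound, while the off-diagonal contribution is handled by the correlation hypothesis,
\[
\frac{1}{n^2}\sum_{i\neq j}\bigl|\Cov(\ell(\bX_i)U_i,\,\ell(\bX_j)U_j)\bigr| \;\leq\; C\,\rho^*_n \cdot \max_i \V(\ell(\bX_i)U_i) \;\to\; 0,
\]
with $U_i = (m_\infty(\bX_i)-Y_i)^2$. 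Slutsky's theorem then yields $\tilde N_n/\tilde D_n \stackrel{p}{\to}\theta_{P_2}/1 = \theta_{P_2}$.

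The principal obstacle is the bridge implicit in the displayed covariance bound above: the hypothesis $\rho^*_n \to 0$ is a statement about correlations of the bare errors $U_i$, whereas the Chebyshev bound needs correlations of the weighted errors $\ell(\bX_i)U_i$. Passing from one to the other requires some additional uniform control on $\ell$, for instance boundedness, which is not directly among the stated assumptions and must be imported from the weight-regularization device of Section~3. The remaining reductions---the $B\to\infty$ limit and the substitution of true weights for estimated ones---are routine given the moment and uniform-convergence hypotheses, so in writing up the proof one should make the boundedness assumption on $\ell$ explicit before invoking the Chebyshev step.
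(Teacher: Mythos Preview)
Your proposal follows essentially the same architecture as the paper's proof: write $\text{OOB}^{\bm w}_{m,B}$ as a ratio, handle the denominator by the weak law, split the numerator via $w_N = c\ell + \epsilon_N$ into a main term and a vanishing remainder controlled by the uniform bound $\eta_N$, and finish the main term with a Chebyshev argument whose diagonal part is $O(1/n)$ under the fourth-moment hypothesis and whose off-diagonal part is governed by $\rho^*_n$; Slutsky then closes the ratio. The only organizational difference is that you isolate the $B\to\infty$ passage as an explicit first reduction, whereas the paper carries finite $B$ through and invokes the limit tacitly at the end.

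The obstacle you flag---that $\rho^*_n$ controls correlations of the bare residuals $U_i$ while Chebyshev needs correlations of the weighted quantities $\ell(\bX_i)U_i$---is real, and the paper's proof makes exactly the same leap without comment: it bounds $\Cov_{P_1}[\ell(\bX_i)U_i,\ell(\bX_j)U_j]$ directly by $\kappa\rho^*_n$ with no justification of how the likelihood-ratio factors are absorbed. So your caveat is not a defect of your argument relative to the paper's; it is a genuine lacuna in both, and your suggestion to make a boundedness assumption on $\ell$ explicit is the honest repair.
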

\noindent\citet{Sugiyama2007} showed that the weighted $n$-fold CV is \textit{almost} unbiased for the true validation error under $P_2$, so that often $\theta_{P_2} = \E_{(\bX, Y) \sim P_2} (m_B(\bX) - Y)^2$. The upshot of this result is that we can use the weighted oob error as a consistent metric of the generalization error for data from $P_2$, and so minimizing the weighted oob error in training should produce a good model for data from $P_2$.  

\subsection{Dealing with missing data}
\label{subsec:missing}
A challenge of using a dataset agglomerated from many diverse sources are missing observations. Discarding missing observations is not desirable, but imputation should be done in a careful manner. In particular, because the procedure above relies on the training data all coming from one distribution, standard imputation procedures (such as mean imputation) effectively impose a new distribution on the missing covariates. To overcome this, we propose the following iterative procedure:
\begin{enumerate}
    \item Let $\mathcal{M}_0 \subset \{1,...,p \}$ denote the column indices of covariates with missing observations, and let $\bX_{\mathcal{M}_0} = \{X^{(j)} : \ j \in \mathcal{M}_0\}$, and similarly let $\bX_{\mathcal-{M}_0} = \{X^{(j)} : \ j \notin \mathcal{M}_0\}$ 
    \item Sample a covariate $X^{(j)}$ from the columns of $\bX_{\mathcal{M}_0}$ randomly. Train a random forest with $X^{(j)}$ as the response, using only data from $\bX_{-\mathcal{M}_0}$. This requires subsetting the dataset to $\{ \bX_i : X_i^{(j)} \ \text{is not missing}\}$.
    \item For each $\{\bX_i : X_i^{(j)} \ \text{is missing}\}$ sample $U_i \sim Unif(0, 1)$ and set $X_i^{(j)} = \hat{Q}_{U_i}(\bX_{i, -\mathcal{M}_0})$. Set $\mathcal{M}_1 = \mathcal{M}_0 \setminus \{j\}$.
    \item Repeat steps (2)-(3), at each stage sampling covariate $j_k$ from $\mathcal{M}_k$ to serve as the response, where $\mathcal{M}_k = \mathcal{M}_{k-1}\setminus\{j_k\}$ for $k = 1,..., |\mathcal{M}_0|$. 
\end{enumerate}
This procedure is, at first glance, similar to the \texttt{missForest} procedured proposed by \citet{Stekhoven2011}, who use a standard regression/classification forest to impute the missing values, which is a form of conditional mean imputation, i.e. imputation of $\E(X^{(j)} | \bm{X}_{-j})$. However, a degenerate distribution at the conditional mean is not the same as the full conditional distribution of $X^{(j)} | \bm{X}_{-j}$, and thus is incompatible with the likelihood procedure described earlier.

The process of using quantile regression for imputation is studied in \citet{Chen2014}, who studies the properties of using parametric and semi-parametric quantile regression for response imputation in a regression context. Now we make the following assumptions, which are motivated by results in \citet{Meinshausen2006}.
\begin{description}
\item[(A1) Continuous, strictly increasing CDF] Let $F_j(x|\bX_{-j} = \bx_{(-j)}) = P(X^{(j)} \leq x | \bX_{-j} = \bx_{(-j)}) $ be the conditional distribution function of each covariate. Then, we assume that $x_1 > x_0 \implies F_j(x_1) > F_j(x_0)$, and that $F_j(x)$ is continuous for every $x \in \mathbb{R}$.
\item[(A2) Access to consistent CDF estimator] Assume that $\hat{F}_j(x|\bX_{-j} = \bx_{(-j)})$ satisfies 
\[
\hat{F}_j(x|\bX_{-j} = \bx_{(-j)}) \stackrel{p}{\to} F_j(x|\bX_{-j} = \bx_{(-j)}) \text{ for all } x \in \mathbb{R}, \text{ as } n \to \infty .
\]
\end{description}
Any distribution satisfying (A1) will have a well-defined conditional quantile function, $Q_p(\bx_{-j}) = F_j^{(-1)}(p | \bX_{-j} = \bx_{-j})$; further, the conditional quantile function will be continuous. While the empirical CDF is not everywhere-continuous, we can still define $\hat{F}_j^{(-1)}(p) =  \inf\{x: \ \hat{F}_j(x) \geq p \}$. Then, (A2) implies that $ F_j(\hat{F}_j^{(-1)}(p))\stackrel{p}{\to} \hat{F}_j(\hat{F}_j^{(-1)}(p)) = p $ for all $p \in (0, 1)$. Because $F_j^{(-1)}$ is continuous, the continuous mapping theorem gives that
\begin{equation} \label{eqn:PIT}
    F_j^{(-1)}(F_j(\hat{F}_j^{(-1)}(p))) = \hat{F}_j^{(-1)}(p) \stackrel{p}{\to} F_j^{(-1)}(p)  \text{ as } n\to\infty \ \forall p \  \in (0,1).
\end{equation}
\autoref{eqn:PIT} holds uniformly for $p$ in the unit interval, so it will also hold for $U \sim Unif(0,1)$. The probability integral transform gives that $F_j^{-1}(U)$ is a random variable with CDF $F_j$. The quantile regression forests of \citet{Meinshausen2006} satisfy (\textbf{A2}) for a wide class of distributions, and so the upshot of this result is that the imputation scheme suggested above provides a consistent way of generating imputations that follow $P_1^*$. Thus, this imputation scheme is compatible, asymptotically, with the likelihood ratio procedure described earlier.

\section{Simulations}\label{sec:verify}
We now provide a variety of simulations to demonstrate the utility of our proposed method in various settings.
\subsection{An Illustrative Regression Example}
We begin with a simple example of a covariate shifted model, and demonstrate that the weighted forest can indeed pick up on local behavior. The model for the simulation is given by
\[
\begin{split}
        Y | X &\sim \mathcal{N}\left(\varphi(X), 0.5 \right) \\
        \varphi(X) &= \max\left\{\frac{e^X}{1 + e^X} \sin(X),\frac{e^{-X}}{1 + e^{-X}} \sin(-X)  \right\}
\end{split}
\]
where $\varphi(x)$ has considerable local structure. 
To simulate covariate shift, we draw training data according to $P_1(X) = \mathcal{N}(-4, 3.5^2)$ and testing data according to $P_2(X) = \mathcal{N}(3.5, 1.5^2)$. The training distribution is quite dispersed, whereas the test distribution concentrates mass around a particular region of the real line. 
We implement \autoref{alg:weightedRF} using two sources of weights: 1) Learned weights from the method of \citet{Kanamori2009} and 2) oracle weights, corresponding to $\ell(X) \propto \frac{\phi\left(\frac{X -3.5}{1.5} \right)}{\phi\left(\frac{X + 4}{3.5} \right)}$, where $\phi(\cdot)$ is the standard normal density function. We draw $n = 500$ and $n_{\text{test}} = 250$ points from the shift model as the validation set. Results are shown in \autoref{fig:CovShift_Results}. We see that the unweighted forest struggles to pick up on the main signal in the test area, while the oracle weighted and learned weighted forests come much closer to the true signal. The unweighted forest fits a constant function on the high mass regions of $P_2$, whereas the oracle/learned weight forests are much closer to the truth. Note that this improvement comes at the cost of decreased performance in the region around $X = 0$, but this area is does not contribute much mass to the RMSE under $P_2$. The learned weights are approximately correct until around $X = 3$, at which point the lack of data in this region leads to a decline in weight performance. Running this simulation over 150 runs, we see that on average the ranger model has $RMSE = 0.2440$, the learned weighted model has $RMSE = 0.1565$ and the oracle weighted model has $RMSE = 0.1133$. While model performance is more than just RMSE, we see a convincing case that the weighted forest is able to adapt to a specified region of interest.

\begin{figure}[t]
    \centering
    \includegraphics[width = .75\textwidth]{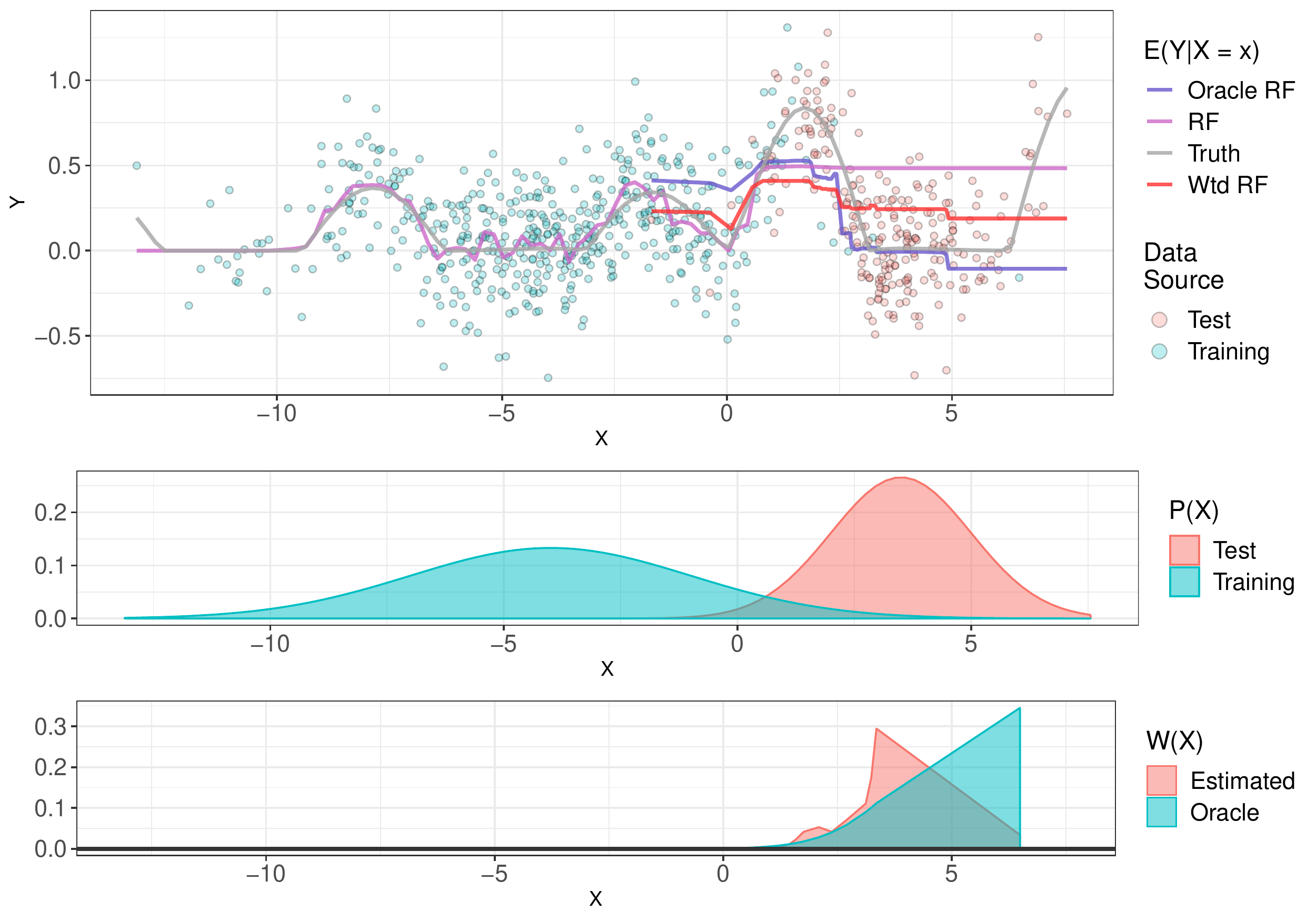}
    \caption{\textit{Top:} Fitted functions according to the three tested models, along with an overlay of the training points. \textit{Center:} The training and test densities used. \textit{Bottom:} Estimated density ratio terms and true density ratio terms.}
    \label{fig:CovShift_Results}
\end{figure}

\subsection{High Dimensional Simulation} \label{subsec:highdim}
We now compare our procedure against a baseline random forest. 
The random forest models used are trained using the \texttt{ranger} package \citep{Wright2015}. For computational efficiency, the resampling is done without replacement so that each tree is trained on $k_n < n$ unique observations. Since approximately 63\% of the dataset is represented in a given bootstrap resample, so we take $k_n = 0.6n$. Implementation of the weighted forest is done using the \texttt{rpart} package using the weights option \citep{Therneau1997}. For each model, we build $B = 500$ trees. 

We draw $150$ datasets of size $n = 1000$ with $p = 31$ covariates along with $n_{test} = 200$ points to be used as a validation set. The covariate distribution is given by
\begin{align*}
    [X^{(1)},..., X^{(6)}] &\sim \text{Dirichlet}(\bm{\alpha})\\
    X^{(7)},..., X^{(31)} &\iid \text{Uniform}(0,1).
\end{align*}
where $\bm{\alpha}$ is a pre-specified parameter. For the training set, we use $\bm{\alpha}_1 = \lambda^{[1, 2, 3, 4, 5, 6]}$ and for the test set, we use $\bm{\alpha}_2 =  \lambda^{[6, 5,4,3, 2, 1]}$, where $\lambda > 0$ is a parameter that controls how disparate the densities are (higher $\lambda$ leads to higher discrepancy). In these simulations, we use $\lambda \in \{\texttt{1, 1.07, 1.14, 1.21, 1.29, 1.36, 1.43, 1.5}\}$ - noting that $\lambda = 1$ is the case where $P_1 = P_2$.

Note that $P_2$ concentrates much more density on $X^{(5)}, X^{(6)}$ than $P_1$, but they still have the same support. The inclusion of 25 predictors whose distribution does not change is to reflect the fact that $P_1$ and $P_2$ may include the same marginal distribution for many covariates. We simulate a response, $Y$, using several different response functions, summarized in \autoref{tab:ydistn}.
\begin{table}[t]
    \centering
    \begingroup\footnotesize
    \begin{tabular}{cl}
    \hline
       Model \#  &  Data Generating Model\\
       \hline
         1 &  $Y = 5X^{(1)} + \epsilon$ \\
         2 &  $Y = 5\sin(\pi X^{(1)}) + \epsilon$ \\
         3 &  $ Y = 10\sin(\pi X^{(1)} X^{(2)}) + 20(X^{(3)} - 0.5)^2 + 10X^{(4)} + 5X^{(5)} + \epsilon$ \\
         4 & $Y = 5e^{2\sqrt{X^{(1)}X^{(2)}} + X^{(6)}} + \epsilon$ \\ 
         5 & $Y = 5\sum_{j=1}^5 \big(X^{(j)}\big)^2 + \epsilon$\\
         \hline
    \end{tabular}
    \caption{Distributions of $Y|\bm{X}$ for each model used in the simulation. In each case, $\epsilon$ is mean 0, Gaussian noise with $\E(\epsilon^2) = 0.25$.}
    \label{tab:ydistn}
    \endgroup
\end{table}
Model 1 is intended to demonstrate a situation where the marginal distribution of $Y$ may vary dramatically between $P_1$ and $P_2$. 
Model 2 shows a situation where the conditional mean is a periodic function of $X^{(1)}$, so discrepancies in the magnitude of $X^{(1)}$ should affect the response less adversely. Model 3 is the popular MARS simulation model \citep{Friedman1991}, which has been used as a stand-in for a complex regression function in previous work \citep{Mentch2016, Xu2016}. Model 4 similarly represents a complex function with a discontinuity. Finally, Model 5 represents a model where the marginal distribution of $Y$ is agnostic to changes between $P_1$ and $P_2$. 

\subsection{Simulation Results}
We analyze simulation results over both the data generating model and over the $\lambda$ parameter which controls the discrepancy in $P_1$ and $P_2$. The resulting scores (calculated according to \autoref{eqn:score}), RMSEs, and coverage probabilities are shown in \autoref{fig:highdim_results}. Tables of results are ommitted from the main text for conciseness, and instead are available in \autoref{appdx:DetailedSims}. 

In general, according to the score metric, the weighted forest performs better than the unweighted forest in Models 1 and 2. Moreover, performance is stronger in models 3 and 4 until a certain point, when the shift becomes too large. In model 5, unsurprisingly, the weighted and unweighted forest perform near identically, because the marginal distribution of $Y$ is not changing drastically. Further, looking at the RMSE plots, we see that the weighted forest is consistently able to attain a lower error rate than the unweighted forest in Models 1-4, with some breakdown at high $\lambda$. The one area where performance of the weighted model is somewhat worse than unweighted model is in coverage percentage, where the prediction intervals have slightly lower coverage in many of the situations. However, we note that the weighted procedure still maintains the nominal coverage in all cases for small values of $\lambda$. Moreover, in Models 1 and 2, the shift affects the weighted forest less severely than in Models 3 and 4. Finally, results presented in the appendix show that the weighted forest incurs much smaller prediction intervals  than those of the unweighted procedure. Thus, the weighted forest sacrifices some small coverage probability (and often does not drop below the nominal level) in exchange for much narrower prediction intervals.

\begin{figure}[p]
    \centering
    \includegraphics[width = .8\textwidth]{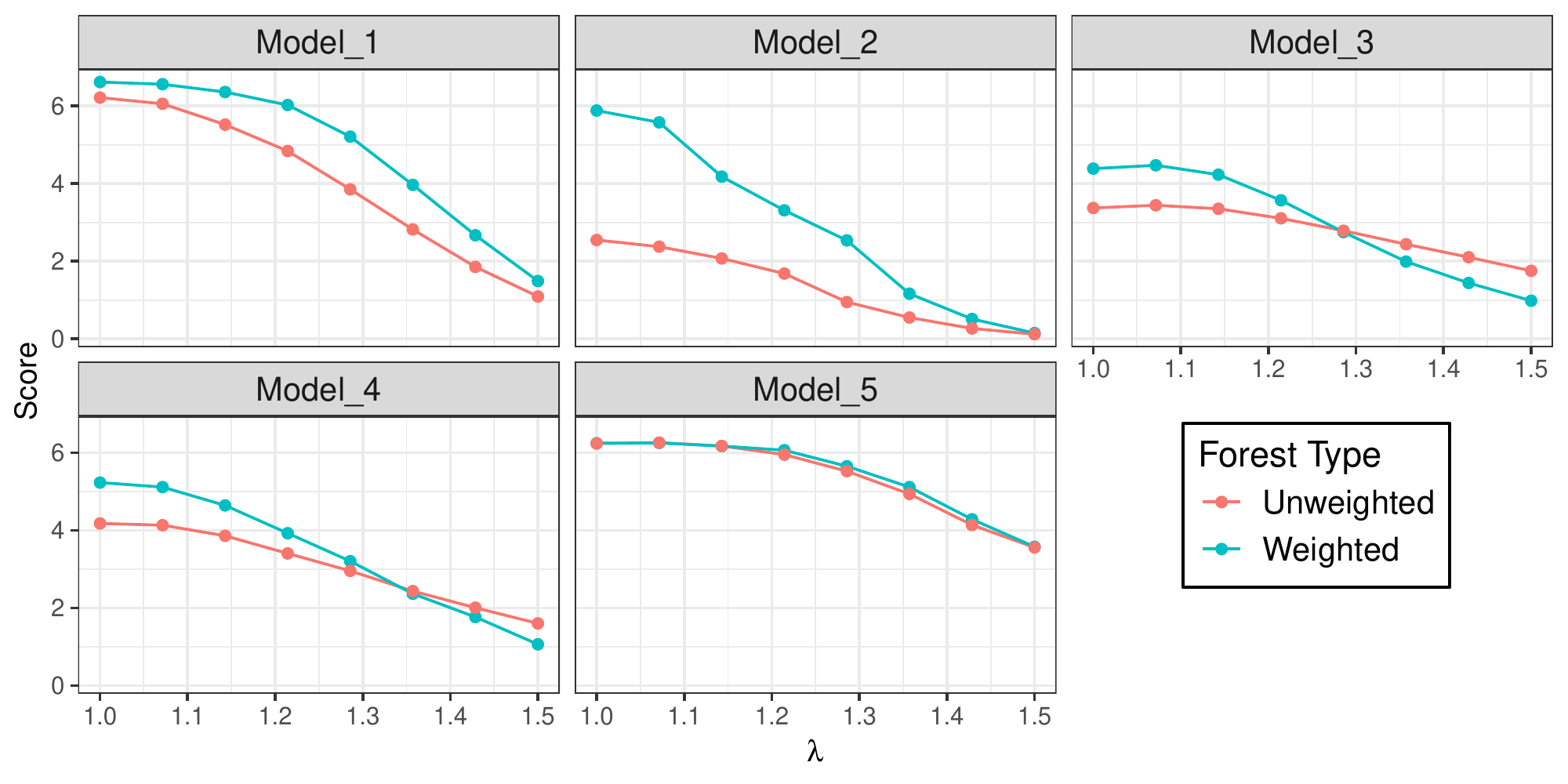}
    \includegraphics[width = .8\textwidth]{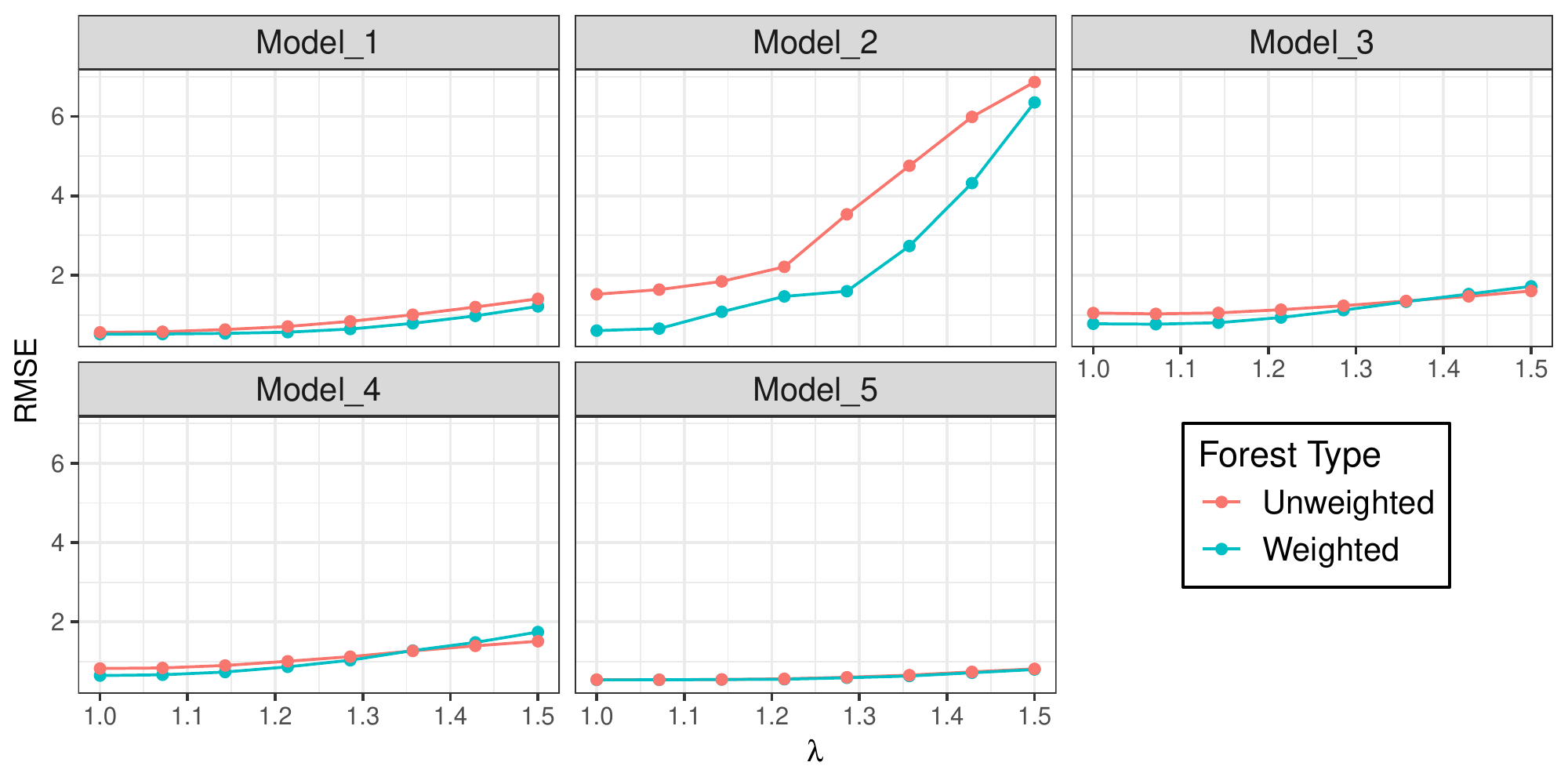}
    \includegraphics[width = .8\textwidth]{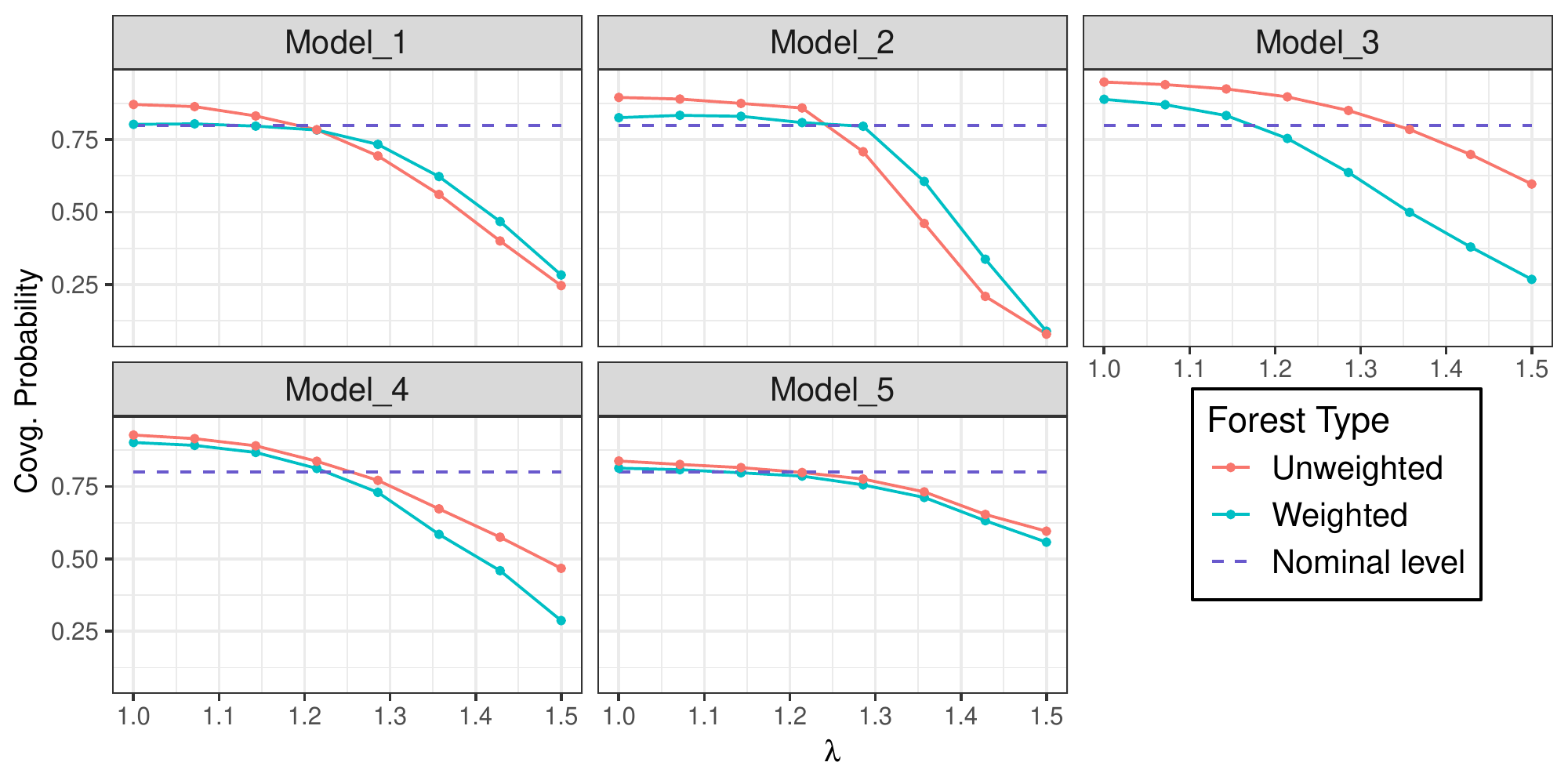}
    \caption{Results for the Score (top), RMSE (center), and Coverage probabilities (bottom) from the simulation study from \autoref{subsec:highdim}. The dashed line in the bottom indicates the nominal coverage level, $0.80$.}
    \label{fig:highdim_results}
\end{figure}

\section{Application to Hurricanes}
\label{sec:hurr}
We now turn to the problem of forecasting hurricane power outages. To begin, we apply this procedure described in \autoref{subsec:missing} to impute the missing values in the training data. In total, 26 columns had missingness and there were a total of 12244 observations that needed imputation, a non-negligible portion of the dataset. We note that because of how the training/test splits overlap from storm to storm, the imputation procedure covers both the training and test sets. We fit a weighted forest and a random forest with $\texttt{mtry} = 50$ and $\texttt{nodesize} = 5$, corresponding to the parameters suggested from \autoref{tab:unw_metrics}. For the weighted model, we again use the method of \citet{Kanamori2009} to estimate the weights. Moreover, we fix the minimum effective sample size at $n_0 = 0.75n$ and run the optimization procedure from \autoref{subsec:lrt_learn} to estimate the weight regularization $\lambda$. The results are presented in \autoref{tab:WeightedStorms}. We see that the performance in general is similar between the weighted and unweighted models, but the weighted model provides slight improvements in Harvey, Irma, and Matthew, in terms of the score metric. 

\begin{table}[t]
\centering
\begingroup\footnotesize
\begin{tabular}{c|c|c|ccccc}
  \hline
Storm & Model & $\lambda$ & RMSE & MAE & Covg & Interval Width & Score \\ 
  \hline
Harvey-2017 & Weighted & 0.1305 & 0.9097 & 0.7327 & 0.8014 & 2.5033 & 3.6171 \\ 
  Harvey-2017 & Unweighted & 0.1305 & 0.9069 & 0.7467 & 0.7679 & 2.4358 & 3.4848 \\ 
   \hline
Irma-2017 & Weighted & 0.0084 & 1.4021 & 1.1608 & 0.4615 & 2.4658 & 1.6394 \\ 
  Irma-2017 & Unweighted & 0.0084 & 1.4111 & 1.1786 & 0.3776 & 2.3777 & 1.3592 \\ 
   \hline
Sandy-2012 & Weighted & 1.0000 & 1.2286 & 1.0357 & 0.5391 & 2.1310 & 2.1901 \\ 
  Sandy-2012 & Unweighted & 1.0000 & 1.2204 & 0.9876 & 0.5521 & 2.2075 & 2.2353 \\ 
   \hline
Nate-2017 & Weighted & 0.3602 & 0.8355 & 0.7225 & 0.8528 & 2.6684 & 3.8660 \\ 
  Nate-2017 & Unweighted & 0.3602 & 0.8154 & 0.6746 & 0.8615 & 2.4930 & 4.1285 \\ 
   \hline
Matthew-2016 & Weighted & 1.0000 & 0.7932 & 0.6193 & 0.8898 & 2.5561 & 4.3897 \\ 
  Matthew-2016 & Unweighted & 1.0000 & 0.7943 & 0.6298 & 0.8924 & 2.6867 & 4.2988 \\ 
   \hline
Arthur-2014 & Weighted & 1.0000 & 1.0724 & 0.8634 & 0.6721 & 2.3111 & 2.8540 \\ 
  Arthur-2014 & Unweighted & 1.0000 & 1.0616 & 0.8432 & 0.6745 & 2.2994 & 2.8983 \\ 
   \hline
\end{tabular}
\endgroup
\caption{Model performance by storm, with weighted and unweighted storms fitted. Bolded values represent the better of the two by storm and loss function. $\lambda$ value reported is selected by the effective sample size calculation from \autoref{subsec:lrt_learn}.} 
\label{tab:WeightedStorms}
\end{table}

As a followup, we additionally implemented a study of tuning the model using the weighted out of bag metric from \autoref{subsec:tuning}. To do this, we tune the \texttt{mtry} parameter over a grid consisting of $\mathcal{M} = \{27, 39, 51, 63, 75\}$ for both an unweighted and weighted random forest. For the weighted forest, we record $\text{OOB}^{\bm{w}}_{m, B}$ and the weighted RMSE, and $\text{OOB}_{m, B}$ and the unweighted RMSE. The results are shown in  \autoref{fig:OOB_figure}, where the out of bag error for each mtry value is plotted against the RMSE of that model. For all storms except Hurricane Nate, we see that both $\text{OOB}_{m,B}$ and $\text{OOB}^{\bm{w}}_{m,B}$ dramatically underestimate the holdout RMSE, with the weighted out of bag error providing a slightly less biased estimate. However, in the context of model selection, typically the model with the lowest out of bag error (and thus lowest estimated generalization error) is selected. Thus, for model selection purposes, the generalization error estimate is less important than the ranking. We see that the weighted oob error selects an optimal model for Hurricane Matthew, and a near optimal model for hurricanes Irma and Sandy, while the unweighted model selects an optimal model for Hurricane Sandy, and a near optimal model for Irma, Nate, and Matthew. Moreover, for Hurricane Matthew, the OOB-RMSE rankings are recovered exactly, and for Hurricane Irma the same is true with the exception of one \texttt{mtry} value. In the unweighted case, there are no such clear stories. 
\begin{figure}[H]
    \centering
    \includegraphics[width = .6\textwidth]{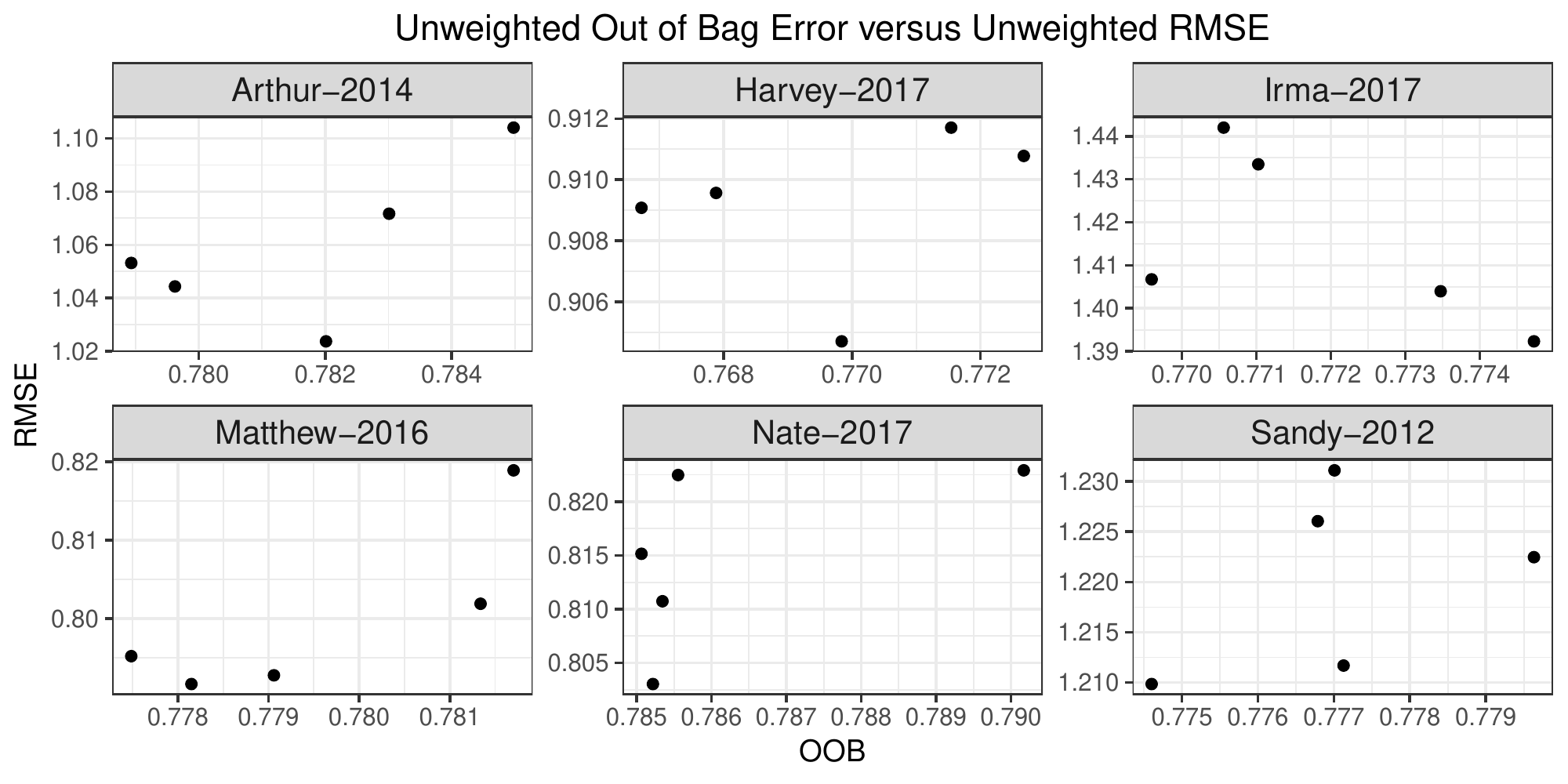}
        \includegraphics[width = .6\textwidth]{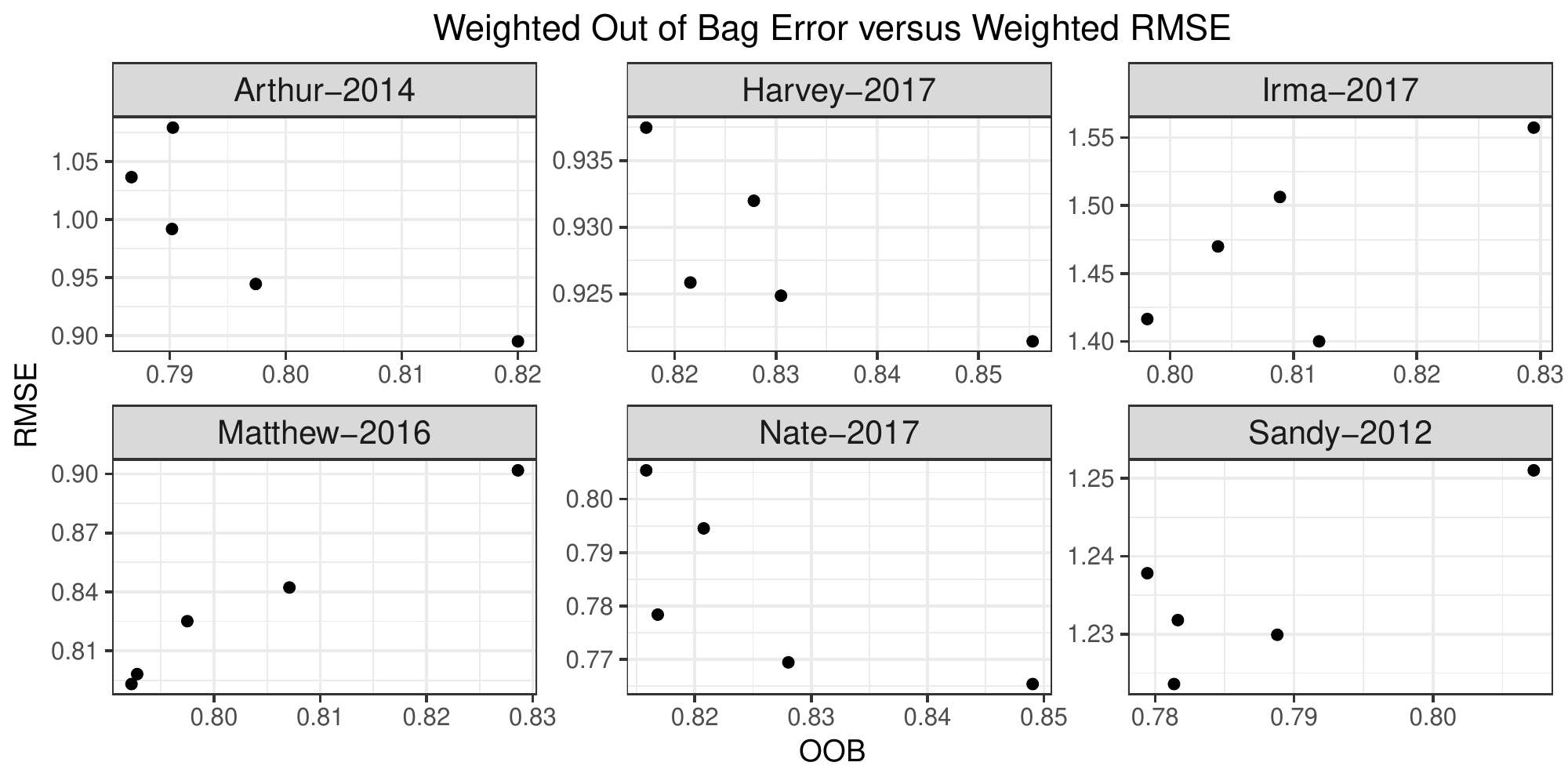}
    \caption{Out of bag error versus holdout RMSE. \textbf{Top:} Results for the unweighted forest. \textbf{Bottom:} Results for the weighted forest.}
    \label{fig:OOB_figure}
\end{figure}

\section{Conclusion}
\label{sec:conc}

We sought to modify the random forest algorithm to account for distributional changes between test and training sets, which often arise in practice. We accomplish this by imposing a covariate shift assumption, and then using existing density ratio methods to estimate the likelihood ratio weights, $\ell(\bX) \propto \frac{dP_2(\bX)}{dP_1(\bX)}$. We moreover provided methods for imputing missing data and tuning the model in ways that respect the statistical assumptions associated with the problem. The simulation study included clearly demonstrates the utility of the proposed method - the importance weighted forest typically outperforms a standard random forest in the covariate shift case. However, importance weighting is only able to address small changes in covariate distribution. Indeed, in \autoref{fig:highdim_results} it was shown that both the weighted and unweighted forest perform worse as the magnitude of the shift grows.

\bigskip
\begin{center}
{\large\bf SUPPLEMENTAL MATERIALS/ACKNOLWEDGEMENTS}
\end{center}
\begin{description}
\item[Supplementary Appendix] Appendix containing more details about the hurricane data, the method proposed, and a proof of \autoref{prop:prop1}.

\item[R File] An R file for loading the data and methods, and running the simulations.

\item[Funding] LM was partially supported by NSF DMS-1712041. This research was supported in part by the University of Pittsburgh Center for Research Computing. The research presented in this work was also supported by the U.S. Department of Homeland Security’s National Risk Management Center under the National Infrastructure Simulation and Analysis Center (NISAC) project. 

\end{description}

\newpage
\appendix
\setcounter{table}{0}
\renewcommand{\thetable}{A\arabic{table}}
\setcounter{figure}{0}
\renewcommand{\thefigure}{A\arabic{figure}}

\section{Details on the Hurricane Data} \label{appdx:DetailedTab}
Here, we provide more detail about the hurricane data used in \autoref{sec:intro} and \autoref{sec:hurr}. In particular, in \autoref{tab:covariates} we summarise the information used in the model.

\begin{table}[H]
\begin{center}
  \begin{threeparttable}[t]
 \begin{tabular}{l|l}
        \toprule
        \textbf{Predictor} & \textbf{Source}  \\
         \midrule
         Maximum Sustained Wind & NOAA-Hurdat2 \\
         Maximum Gust Wind & Estimated from Maximum Sustained Wind \tnote{1}\\
         Gust Wind Duration & Estimated from Maximum Sustained Wind \tnote{1} \\
         Population density & SEDAC 2010 \\
         Tree Species & GECSC \\
         Soil Texture & Polaris \\
         Land Cover & NLCD2011 \\
         Elevation & DEM-GMTED \\
         Soil Moisture & NOAA-CPC \\
         SPI & NOAA-NCDC \tnote{2} \\
         \bottomrule
    \end{tabular}
    \begin{tablenotes} \footnotesize{
     \item[1] Model used is part of the \texttt{R} package \texttt{hurricaneexposure} 
     \item[2] SPI refers to the standard precipitation index, and is a measure of precipitation normalized to historical records.
     }
   \end{tablenotes}
    \end{threeparttable}%
       \end{center}
    \caption{Model covariates and sources. Covariates come in different resolutions, but are aggregated to the county level}
    \label{tab:covariates}
\end{table}%

\section{Proof of Proposition 1} \label{appdx:PropProof}
Here, we prove Proposition 1, which is restated below followed by its proof.
\begin{prop}
Let $\{Z_i\}_{i=1}^N \iid Bernoulli(\alpha)$, and let $ (\bX_i, Y_i)_{i=1}^{n+m} | Z_i \iid Z_i P_2 + (1-Z_i)P_1$, where $P_1$ and $P_2$ satisfy \autoref{eqn:dist_fact}. Define $m = \sum_{i=1}^N Z_i$. Assume that $Y_i \geq 0$ almost surely, $\sup_{\bx}\E(Y^4 | \bX = \bx) < K$ for some constant $K$, and that \[
\rho^*_n = \max_{k = 1,2}\max_{i\neq j} Cor_{P_k}\bigg[(m_{B_i}(\bX_i) - Y_i)^2, (m_{B_j}(\bX_j) - Y_j)^2\bigg] \to 0
\]
as $n\to\infty$. Further, assume that for all $\bx \in \mathcal{X}$, $w_N(\bx)$ is consistently proportional to the likelihood ratio, $\ell(\bx) = \frac{dP^*_2(\bx)}{dP^*_1(\bx)}$, i.e. $w_N$ satisfies
\[
w_N(\bx) = c \frac{dP^*_2(\bx)}{dP^*_1(\bx)} + \epsilon_N(\bx)  \ \forall \ \bx \in \mathcal{X}
\]
where $c$ is a constant that does not depend on $\bx$, and $\epsilon_N(\bx)$ is a sequence of random variables satisfying $P(\sup_{\bx}|\epsilon_N(\bx)| < \eta_N) = 1$, where $\eta_N \to 0$ as $N\to\infty$. Let $\theta_{P_2} = \E_{P_2}(\lim_{B\to\infty}\text{OOB}_{m,B})$. Then, as $B, n \to\infty$
\[
\text{OOB}^{\bm{w}}_{m,B} \stackrel{p}{\to} \theta_{P_2}.
\]
\end{prop}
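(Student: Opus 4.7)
The statement involves two sequential limits ($B\to\infty$ inside and $n\to\infty$ outside), and my plan would treat them in that order. \textbf{Stage 1.} For each fixed $n$ and each $i$, the inner average $\frac{1}{B_i}\sum_{k=1}^B T_{\bm{w}}(\bX_i;\xi_k)I(\bX_i\notin \df^*_k)$ is a mean over $B_i\to\infty$ trees drawn from conditionally i.i.d.\ randomizations, so the strong law of large numbers on $\xi$ implies it converges a.s.\ to
$$m_{-i}(\bX_i) := \E_{\xi}\bigl[T_{\bm{w}}(\bX_i;\xi)\,\big|\,\bX_i\notin\df^*_\xi\bigr].$$
Since the outer sum over $i\le n$ is finite and the $w_N(\bX_i)$ are a.s.\ finite, passing the limit through the finite sum and the ratio gives $\text{OOB}^{\bm{w}}_{m,B}\xrightarrow{a.s.} W_n := \frac{\sum_i w_N(\bX_i)(m_{-i}(\bX_i)-Y_i)^2}{\sum_i w_N(\bX_i)}$, reducing the problem to showing $W_n\xrightarrow{p}\theta_{P_2}$ as $n\to\infty$.

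\textbf{Stage 2.} I would split the numerator $N_n$ and denominator $D_n$ of $W_n$ using the weight expansion $w_N(\bx) = c\,\ell(\bx) + \epsilon_N(\bx)$, $\sup_{\bx}|\epsilon_N(\bx)|\le \eta_N\to 0$. The denominator satisfies $n^{-1}D_n = c\cdot n^{-1}\sum_i \ell(\bX_i) + O_P(\eta_N)$, and since $\bX_i\iid P_1^*$ with $\E_{P_1^*}[\ell(\bX)] = \int dP_2^* = 1$, the classical WLLN gives $n^{-1}D_n\xrightarrow{p} c$. For the numerator, the $\epsilon_N$ remainder is at most $\eta_N\cdot n^{-1}\sum_i (m_{-i}(\bX_i)-Y_i)^2 = o_P(1)$, using that the uniform bound $\sup_{\bx}\E[Y^4\mid \bX=\bx]<K$ is inherited by $m_{-i}(\bX_i)$ (each tree prediction is a convex combination of training $Y_j$'s). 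The leading term is $c\cdot n^{-1}\sum_i \ell(\bX_i)(m_{-i}(\bX_i)-Y_i)^2$, whose mean, after conditioning on $\df_{-i}$ so that $m_{-i}$ is deterministic and $(\bX_i,Y_i)$ is an independent draw from $P_1$, simplifies by the covariate-shift importance-sampling identity $\E_{P_1}[\ell(\bX)g(\bX,Y)]=\E_{P_2}[g(\bX,Y)]$ to
$$c\,\E\bigl[\E_{P_2}[(m_{-i}(\bX)-Y)^2\mid \df_{-i}]\bigr],$$
which matches $c\,\theta_{P_2}$ by the definition $\theta_{P_2}=\E_{P_2}[\lim_B \text{OOB}_{m,B}]$.

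\textbf{Main obstacle.} The hard part is promoting these mean calculations to convergence in probability, since the summands $(m_{-i}(\bX_i)-Y_i)^2$ are \emph{not} independent across $i$: each leave-one-out forest $m_{-i}$ is built from the remaining $n-1$ training points, so the $i$ and $j$ terms share data. This is exactly the role of the hypothesis $\rho_n^*\to 0$. Expanding $\V(n^{-1}N_n) = n^{-2}\sum_i \V(\cdot) + n^{-2}\sum_{i\ne j}\Cov(\cdot,\cdot)$ and bounding each covariance by $\rho_n^*$ times the uniform second moment of $\ell(\bX)(m_{-i}(\bX)-Y)^2$ (finite thanks to the fourth-moment hypothesis on $Y$, together with boundedness of $\ell$ after weight regularization as in Section~3.2) yields $\V(n^{-1}N_n)\le C/n + C\,\rho_n^*\to 0$. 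Chebyshev's inequality then gives $n^{-1}N_n\xrightarrow{p} c\,\theta_{P_2}$, and Slutsky's theorem applied to $W_n=N_n/D_n$ completes the argument.
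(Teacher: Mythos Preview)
Your proposal is correct and follows essentially the same route as the paper: write $\text{OOB}^{\bm{w}}_{m,B}$ as a ratio, handle the denominator by the WLLN (using $\E_{P_1^*}\ell(\bX)=1$), split the numerator via $w_N=c\,\ell+\epsilon_N$, kill the $\epsilon_N$ piece with the uniform bound $\eta_N\to 0$, and then control the variance of the leading term by decomposing into diagonal ($O(1/n)$) and off-diagonal ($O(\rho_n^*)$) parts before applying Chebyshev and Slutsky. The only cosmetic difference is that you pass to the $B\to\infty$ limit explicitly in Stage~1, whereas the paper carries $m_{B_i}$ through the calculation and takes both limits at the end; your appeal to bounded $\ell$ via weight regularization is an extra (harmless) assumption that the paper does not state but effectively needs when bounding the covariance terms involving $\ell(\bX_i)(m_{B_i}(\bX_i)-Y_i)^2$.
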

\begin{proof}
To show this, we use a standard trick in the importance sampling literature to rewrite $\text{OOB}^{\bm{w}}_{m,B}$ as
\begin{equation}\label{eqn:oob}
    \text{OOB}^{\bm{w}}_{m,B} = \frac{\sum_{i=1}^n w_i(m_{B_i}(\bX_i)- Y_i)^2}{\sum_{j=1}^n w_j} = \frac{\frac{1}{n}\sum_{i=1}^n w_i(m_{B_i}(\bX_i) - Y_i)^2}{\frac{1}{n}\sum_{j=1}^n w_j}\ .
\end{equation}
An important point of clarification is that we use $N$ to be the total sample size, $n$ to be the size of the training set, and $m$ be the size of the test set. Because $n \sim \text{Binomial}(N, \alpha)$, $\lim_{N\to\infty} n = \infty$ (and similarly for $m$) almost surely. Thus, we use $n\to\infty$, $m\to\infty$, and $N\to\infty$ interchangeably. The weak law of large numbers gives that as $n\to\infty$, the denominator of \autoref{eqn:oob} obeys
\[
\frac{1}{n}\sum_{j=1}^n w_j \stackrel{p}{\to} c\E_{\bX\sim P^*_1}\bigg[\frac{dP^*_2(\bX)}{dP^*_1(\bX)}\bigg] = c\int_{\mathcal{X}} dP_2^*(\bx) = c.
\]
By assumption, $w_i = c\frac{dP^*_2(\bX_i)}{dP^*_1(\bX_i)} + \epsilon_N(\bX_i)$, so that the numerator of \autoref{eqn:oob} can be expressed as
\[
\frac{1}{n}\sum_{i=1}^n \bigg[c\frac{dP^*_2(\bX_i)}{dP^*_1(\bX_i)}+ \epsilon_N(\bX_i)\bigg] \bigg(\frac{1}{B_i} \sum_{k=1}^{B_i} T_{\bm{w}}(\bX_i; \xi_k) - Y_i\bigg)^2.  
\] 
Now, we want to show that this converges in probability to $c\theta_{P_2}$. We do this by analyzing the variance of the numerator of \autoref{eqn:oob}.  Note that we have
\begin{align*}
    &\V\bigg[\frac{1}{n}\sum_{i=1}^n \bigg(c\frac{dP^*_2(\bX_i)}{dP^*_1(\bX_i)}+ \epsilon_N(\bX_i)\bigg) \bigg(\frac{1}{B_i}  \sum_{k=1}^{B_i} T_{\bm{w}}(\bX_i; \xi_k) - Y_i\bigg)^2 \bigg] \\
    &= \V\bigg[\underbrace{c\frac{1}{n}\sum_{i=1}^n \bigg(\frac{dP^*_2(\bX_i)}{dP^*_1(\bX_i)}\bigg)\bigg(\frac{1}{B_i}  \sum_{k=1}^{B_i} T_{\bm{w}}(\bX_i; \xi_k) - Y_i\bigg)^2}_{S_{1,n}} + 
    \underbrace{ \frac{1}{n}\sum_{i=1}^n \epsilon_N(\bX_i)\bigg(\frac{1}{B_i}  \sum_{k=1}^{B_i} T_{\bm{w}}(\bX_i; \xi_k) - Y_i\bigg)^2}_{S_{2,N}} \bigg]. \\
\end{align*}

\vspace{-8mm}

\noindent We approximate $\V(S_{1,n}+ S_{2,N})$ as $\V(S_{1,n}) + \V(S_{2,N})$, because $\Cov(S_{1,n}, S_{2,N}) \to 0$ as $N\to\infty$. To see this last fact, note that $S_{2,N}$ satisfies
\begin{equation}\label{eqn:s2bound}
|S_{2,N}| < \frac{\eta_N}{n}\sum_{i=1}^n\bigg(\frac{1}{B_i}  \sum_{k=1}^{B_i} T_{\bm{w}}(\bX_i; \xi_k) - Y_i\bigg)^2
\end{equation}
and that the quantity on the right hand side is integrable, so that by dominated convergence, $\E(S_{2,N}) \to 0 $. Moreover, by assumption, the squared out of bag residuals are bounded in probability (because they are assumed to have finite mean/variance). Thus, the cross-term can be controlled as
\begin{align*}
\E&\bigg[S_{2,N}\times\frac{c}{n}\sum_{i=1}^n \bigg(\frac{dP^*_2(\bX_i)}{dP^*_1(\bX_i)}\bigg)\bigg(\frac{1}{B_i}  \sum_{k=1}^{B_i} T_{\bm{w}}(\bX_i; \xi_k) - Y_i\bigg)^2\bigg]  \\
&< \E\bigg[ \frac{\eta_N}{n}\sum_{i=1}^n\bigg(\frac{1}{B_i}  \sum_{k=1}^{B_i} T_{\bm{w}}(\bX_i; \xi_k) - Y_i\bigg)^2 \times \frac{c}{n}\sum_{i=1}^n \bigg(\frac{dP^*_2(\bX_i)}{dP^*_1(\bX_i)}\bigg)\bigg(\frac{1}{B_i}\sum_{k=1}^{B_i} T_{\bm{w}}(\bX_i; \xi_k) - Y_i\bigg)^2\bigg]
\end{align*}
which, again by dominated convergence, converges to 0. 

Now, we want to show that the variance of $S_{2,N}$ vanishes. Recall that by hypothesis, $P(\lim_{N\to\infty}S_{2,N} = 0) = 1$, and so it follows that $P(\lim_{N\to\infty}S^2_{2,N} = 0) = 1$. Then, again we can appeal to dominated convergence (using the quantity in \autoref{eqn:s2bound} squared as our upper bound) to get that $\V(S_{2,N}) \to 1$ as $N \to\infty$. 
All that remains to show is that $\V(S_{1,n}) \to 0$ as $n\to\infty$. The variance of $S_{1,n}$ can be expressed as
\begin{align*}
    \V(S_{1,n}) &= \V\bigg[ \frac{c}{n} \sum_{i=1}^n \bigg(\frac{dP^*_2(\bX_i)}{dP^*_1(\bX_i)}\bigg)(m_{B_i}(\bX_i) - Y_i)^2 \bigg] \\
    &= \frac{c^2}{n^2} \sum_{i=1}^n \V\bigg[\bigg(\frac{dP^*_2(\bX_i)}{dP^*_1(\bX_i)}\bigg)(m_{B_i}(\bX_i) - Y_i)^2\bigg] \ +\\  \frac{2c^2}{n^2}\sum_{1 \leq i < j \leq n}& \Cov\bigg[\bigg(\frac{dP^*_2(\bX_i)}{dP^*_1(\bX_i)}\bigg)(m_{B_i}(\bX_i) - Y_i)^2,\bigg(\frac{dP^*_2(\bX_j)}{dP^*_1(\bX_j)}\bigg) (m_{B_j}(\bX_j) - Y_j)^2\bigg].
\end{align*}
Because $Y_i$ is almost surely positive, and $m_{B_i}(\cdot)$ is an average of positive random variables, both are positive almost surely. Also, note that the likelihood ratio term is also positive, so that the whole quantity $\big(\frac{dP^*_2(\bX_i)}{dP^*_1(\bX_i)}\big)(m_{B_i}(\bX_i) - Y_i)^2 > 0$ almost surely. Then, we make use the fact that for positive random variables $W, Z$,
\[
\V_{W,Z \sim P}\big[(W-Z)^2\big] \leq \E_{W, Z \sim P}\big[(W-Z)^4\big] = \E_{W, Z \sim Q}\bigg[\frac{dP(W,Z)}{dQ(W,Z)}(W-Z)^4\bigg]]\leq \max \big( \E_P(W^4), \E_P(Z^4)\big).
\]
Note that in the above, we use $\E_P(W^4)$ to indicate integration over the marginal distribution of $W$ under joint distribution $P$. Because $m_{B_i}$ is a weighted sum of random variables with bounded 4th moments, it also has a bounded 4th moment. Letting $\kappa = \max \{\max_{i}\E_{P_1}(m_{B_i}(\bX_i)^4), K\}$, we see that
\begin{align*}
    \V(S_{1,n}) &\leq \frac{c^2n\kappa}{n^2} + \frac{2c^2}{n^2}\sum_{1 \leq i < j \leq n} \Cov_{P_1}\bigg[\bigg(\frac{dP^*_2(\bX_i)}{dP^*_1(\bX_i)}\bigg)(m_{B_i}(\bX_i) - Y_i)^2,\bigg(\frac{dP^*_2(\bX_j)}{dP^*_1(\bX_j)}\bigg) (m_{B_j}(\bX_j) - Y_j)^2\bigg] \\
    &\leq \frac{\kappa c^2}{n} + \frac{2c^2}{n^2}n^2\kappa\rho^*_n \\
    &= \frac{\kappa c^2}{n} + 2\kappa c^2 \rho^*_n.
\end{align*}
The above goes to 0 by hypothesis, and noting that $\E S_{1,n} = c\theta_{P_2}$, we can apply Chebyshev's inequality to conclude that
\[
\frac{1}{n}\sum_{i=1}^n w_i\bigg(\frac{1}{B_i}  \sum_{k=1}^{B_i} T_{\bm{w}}(\bX_i; \xi_k) - Y_i\bigg)^2 \stackrel{p}{\to} c\theta_{P_2} \ \text{as $N\to\infty$} .
\]
Finally, Slutsky's Lemma gives that $\text{OOB}^{\bm{w}}_{m,B} \stackrel{p}{\to}\theta_{P_2}$ as $N, B\to\infty$.
\end{proof}

\section{Detailed Simulation Results} \label{appdx:DetailedSims}

The purpose of this section of the appendix is to provide specific results for the simulation from \autoref{subsec:highdim} in the form of tables. For each model described in \autoref{tab:ydistn}, we provide the full results for each $\lambda$ value. We also provide plots similar to those from \autoref{fig:highdim_results} for the MAE and Interval Width statistics, for completeness in \autoref{fig:IW_MAE_highdim}.
\begin{figure}[H]
    \centering
    \includegraphics[width = .8\textwidth]{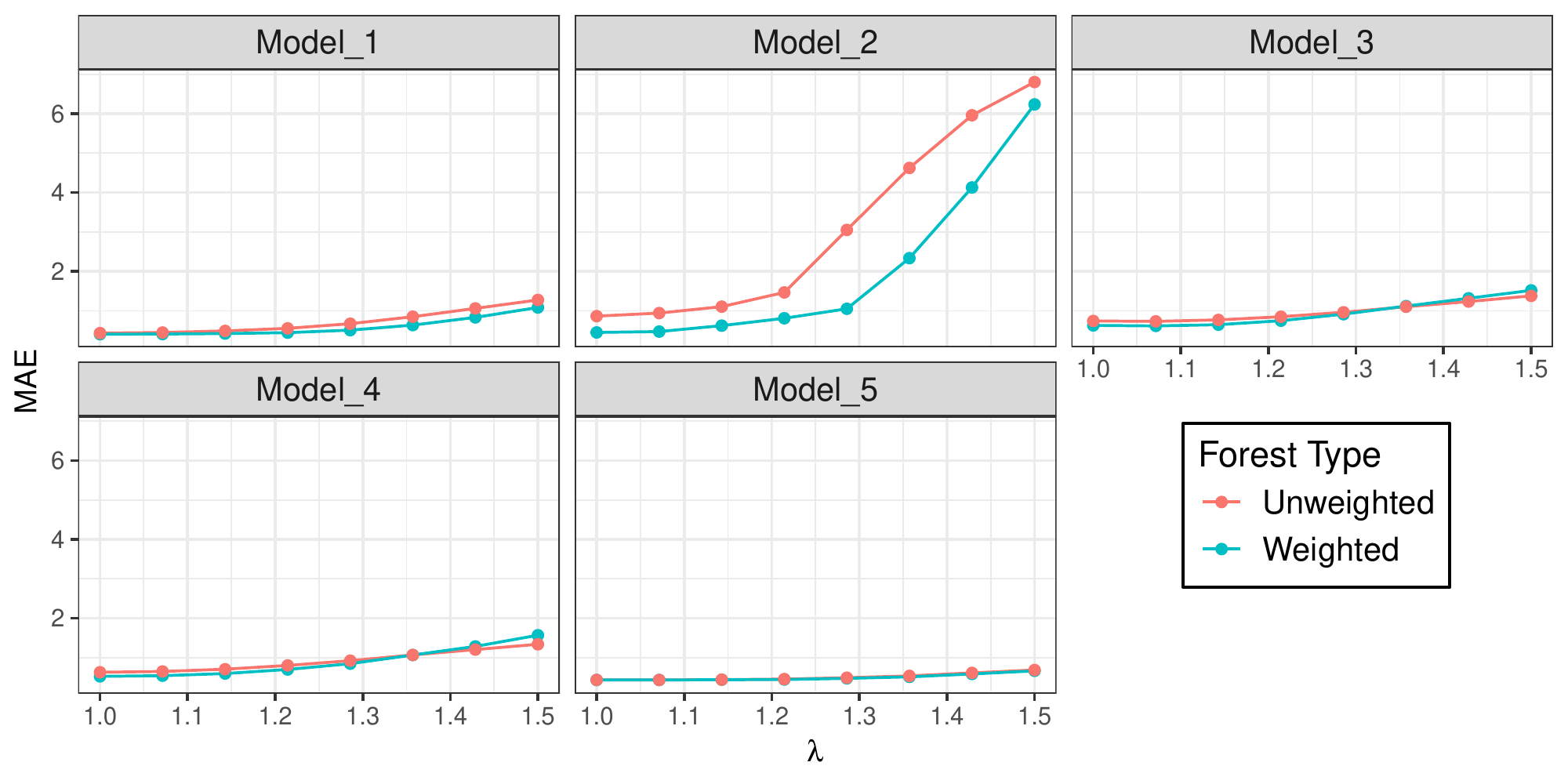}
    \includegraphics[width = .8\textwidth]{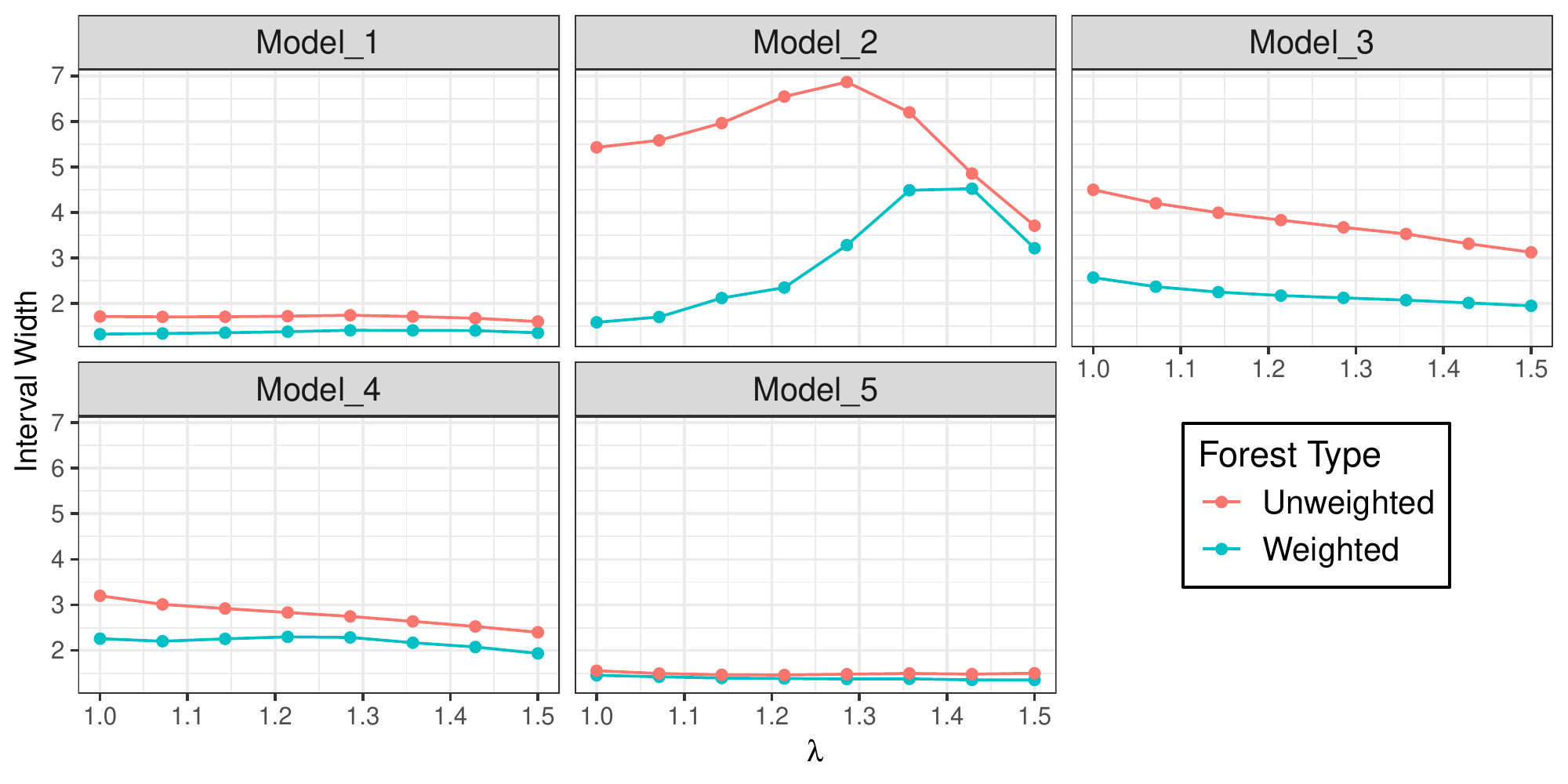}
    \caption{Results from \autoref{subsec:highdim} for MAE (top) and Interval Width (bottom)}
    \label{fig:IW_MAE_highdim}
\end{figure}

\renewcommand{\arraystretch}{.75} 
\begin{table}[H]
\centering
\begingroup\footnotesize
\begin{tabular}{cc|ccccc}
  \hline
$\lambda$ & Model & RMSE & MAE & Covg & Interval Width & Score \\ 
  \hline
1.000 & Weighted & 0.514 & 0.409 & 0.803 & 1.329 & 6.619 \\ 
  1.000 & Unweighted & 0.560 & 0.437 & 0.871 & 1.717 & 6.217 \\ 
   \hline
1.071 & Weighted & 0.518 & 0.413 & 0.805 & 1.345 & 6.562 \\ 
  1.071 & Unweighted & 0.576 & 0.451 & 0.864 & 1.710 & 6.058 \\ 
   \hline
1.143 & Weighted & 0.534 & 0.425 & 0.797 & 1.360 & 6.361 \\ 
  1.143 & Unweighted & 0.631 & 0.493 & 0.832 & 1.711 & 5.521 \\ 
   \hline
1.214 & Weighted & 0.564 & 0.448 & 0.783 & 1.384 & 6.026 \\ 
  1.214 & Unweighted & 0.708 & 0.557 & 0.784 & 1.725 & 4.840 \\ 
   \hline
1.286 & Weighted & 0.643 & 0.511 & 0.734 & 1.414 & 5.211 \\ 
  1.286 & Unweighted & 0.838 & 0.675 & 0.694 & 1.745 & 3.852 \\ 
   \hline
1.357 & Weighted & 0.787 & 0.639 & 0.623 & 1.412 & 3.968 \\ 
  1.357 & Unweighted & 1.004 & 0.852 & 0.561 & 1.717 & 2.818 \\ 
   \hline
1.429 & Weighted & 0.975 & 0.835 & 0.467 & 1.408 & 2.670 \\ 
  1.429 & Unweighted & 1.198 & 1.063 & 0.400 & 1.679 & 1.854 \\ 
   \hline
1.500 & Weighted & 1.214 & 1.087 & 0.283 & 1.360 & 1.489 \\ 
  1.500 & Unweighted & 1.404 & 1.277 & 0.246 & 1.607 & 1.090 \\ 
   \hline
\end{tabular}
\endgroup
\caption{Simulation results for Model 1. Bolded values represent the better for a given $\lambda$ setting.} 
\label{tab:Model_1}
\end{table}
\begin{table}[H]
\centering
\begingroup\footnotesize
\begin{tabular}{cc|ccccc}
  \hline
$\lambda$ & Model & RMSE & MAE & Covg & Interval Width & Score \\ 
  \hline
1.000 & Weighted & 0.604 & 0.453 & 0.826 & 1.590 & 5.884 \\ 
  1.000 & Unweighted & 1.520 & 0.867 & 0.896 & 5.430 & 2.547 \\ 
   \hline
1.071 & Weighted & 0.653 & 0.476 & 0.834 & 1.706 & 5.579 \\ 
  1.071 & Unweighted & 1.636 & 0.945 & 0.890 & 5.584 & 2.375 \\ 
   \hline
1.143 & Weighted & 1.077 & 0.626 & 0.831 & 2.123 & 4.181 \\ 
  1.143 & Unweighted & 1.843 & 1.107 & 0.875 & 5.962 & 2.071 \\ 
   \hline
1.214 & Weighted & 1.465 & 0.813 & 0.809 & 2.352 & 3.311 \\ 
  1.214 & Unweighted & 2.210 & 1.468 & 0.859 & 6.544 & 1.681 \\ 
   \hline
1.286 & Weighted & 1.596 & 1.054 & 0.796 & 3.284 & 2.536 \\ 
  1.286 & Unweighted & 3.533 & 3.053 & 0.708 & 6.864 & 0.946 \\ 
   \hline
1.357 & Weighted & 2.734 & 2.338 & 0.606 & 4.488 & 1.162 \\ 
  1.357 & Unweighted & 4.756 & 4.625 & 0.461 & 6.199 & 0.548 \\ 
   \hline
1.429 & Weighted & 4.319 & 4.129 & 0.337 & 4.523 & 0.511 \\ 
  1.429 & Unweighted & 5.987 & 5.960 & 0.209 & 4.854 & 0.266 \\ 
   \hline
1.500 & Weighted & 6.351 & 6.236 & 0.089 & 3.215 & 0.148 \\ 
  1.500 & Unweighted & 6.865 & 6.803 & 0.079 & 3.709 & 0.119 \\ 
   \hline
\end{tabular}
\endgroup
\caption{Simulation results for Model 2. Bolded values represent the better for a given $\lambda$ setting.} 
\label{tab:Model_2}
\end{table}
\begin{table}[H]
\centering
\begingroup\footnotesize
\begin{tabular}{cc|ccccc}
  \hline
$\lambda$ & Model & RMSE & MAE & Covg & Interval Width & Score \\ 
  \hline
1.000 & Weighted & 0.776 & 0.633 & 0.889 & 2.572 & 4.386 \\ 
  1.000 & Unweighted & 1.047 & 0.744 & 0.949 & 4.499 & 3.372 \\ 
   \hline
1.071 & Weighted & 0.765 & 0.619 & 0.871 & 2.372 & 4.473 \\ 
  1.071 & Unweighted & 1.026 & 0.735 & 0.940 & 4.202 & 3.442 \\ 
   \hline
1.143 & Weighted & 0.803 & 0.650 & 0.833 & 2.253 & 4.231 \\ 
  1.143 & Unweighted & 1.050 & 0.772 & 0.925 & 3.995 & 3.351 \\ 
   \hline
1.214 & Weighted & 0.934 & 0.751 & 0.754 & 2.177 & 3.570 \\ 
  1.214 & Unweighted & 1.130 & 0.852 & 0.898 & 3.833 & 3.105 \\ 
   \hline
1.286 & Weighted & 1.119 & 0.916 & 0.637 & 2.126 & 2.750 \\ 
  1.286 & Unweighted & 1.230 & 0.965 & 0.851 & 3.674 & 2.786 \\ 
   \hline
1.357 & Weighted & 1.331 & 1.122 & 0.499 & 2.076 & 1.990 \\ 
  1.357 & Unweighted & 1.351 & 1.105 & 0.785 & 3.529 & 2.437 \\ 
   \hline
1.429 & Weighted & 1.525 & 1.320 & 0.380 & 2.016 & 1.439 \\ 
  1.429 & Unweighted & 1.467 & 1.236 & 0.699 & 3.315 & 2.103 \\ 
   \hline
1.500 & Weighted & 1.719 & 1.519 & 0.268 & 1.952 & 0.982 \\ 
  1.500 & Unweighted & 1.603 & 1.381 & 0.597 & 3.127 & 1.752 \\ 
   \hline
\end{tabular}
\endgroup
\caption{Simulation results for Model 3. Bolded values represent the better for a given $\lambda$ setting.} 
\label{tab:Model_3}
\end{table}
\begin{table}[H]
\centering
\begingroup\footnotesize
\begin{tabular}{cc|ccccc}
  \hline
$\lambda$ & Model & RMSE & MAE & Covg & Interval Width & Score \\ 
  \hline
1.000 & Weighted & 0.650 & 0.527 & 0.902 & 2.258 & 5.234 \\ 
  1.000 & Unweighted & 0.829 & 0.630 & 0.927 & 3.199 & 4.180 \\ 
   \hline
1.071 & Weighted & 0.670 & 0.544 & 0.892 & 2.202 & 5.117 \\ 
  1.071 & Unweighted & 0.842 & 0.650 & 0.915 & 3.010 & 4.135 \\ 
   \hline
1.143 & Weighted & 0.740 & 0.598 & 0.867 & 2.254 & 4.644 \\ 
  1.143 & Unweighted & 0.905 & 0.708 & 0.890 & 2.920 & 3.862 \\ 
   \hline
1.214 & Weighted & 0.870 & 0.701 & 0.812 & 2.298 & 3.928 \\ 
  1.214 & Unweighted & 1.011 & 0.804 & 0.837 & 2.833 & 3.407 \\ 
   \hline
1.286 & Weighted & 1.037 & 0.848 & 0.730 & 2.286 & 3.208 \\ 
  1.286 & Unweighted & 1.126 & 0.924 & 0.771 & 2.746 & 2.957 \\ 
   \hline
1.357 & Weighted & 1.280 & 1.067 & 0.585 & 2.169 & 2.367 \\ 
  1.357 & Unweighted & 1.271 & 1.069 & 0.673 & 2.638 & 2.436 \\ 
   \hline
1.429 & Weighted & 1.485 & 1.286 & 0.459 & 2.075 & 1.768 \\ 
  1.429 & Unweighted & 1.398 & 1.206 & 0.575 & 2.526 & 2.008 \\ 
   \hline
1.500 & Weighted & 1.747 & 1.569 & 0.287 & 1.934 & 1.064 \\ 
  1.500 & Unweighted & 1.515 & 1.340 & 0.467 & 2.398 & 1.603 \\ 
   \hline
\end{tabular}
\endgroup
\caption{Simulation results for Model 4. Bolded values represent the better for a given $\lambda$ setting.} 
\label{tab:Model_4}
\end{table}
\begin{table}[H]
\centering
\begingroup\footnotesize
\begin{tabular}{cc|ccccc}
  \hline
$\lambda$ & Model & RMSE & MAE & Covg & Interval Width & Score \\ 
  \hline
1.000 & Weighted & 0.543 & 0.434 & 0.813 & 1.454 & 6.247 \\ 
  1.000 & Unweighted & 0.547 & 0.437 & 0.838 & 1.555 & 6.241 \\ 
   \hline
1.071 & Weighted & 0.544 & 0.435 & 0.808 & 1.422 & 6.255 \\ 
  1.071 & Unweighted & 0.545 & 0.436 & 0.826 & 1.493 & 6.263 \\ 
   \hline
1.143 & Weighted & 0.550 & 0.441 & 0.797 & 1.394 & 6.173 \\ 
  1.143 & Unweighted & 0.551 & 0.443 & 0.815 & 1.466 & 6.176 \\ 
   \hline
1.214 & Weighted & 0.558 & 0.446 & 0.786 & 1.383 & 6.068 \\ 
  1.214 & Unweighted & 0.568 & 0.457 & 0.798 & 1.461 & 5.954 \\ 
   \hline
1.286 & Weighted & 0.594 & 0.476 & 0.755 & 1.373 & 5.654 \\ 
  1.286 & Unweighted & 0.607 & 0.492 & 0.776 & 1.478 & 5.527 \\ 
   \hline
1.357 & Weighted & 0.639 & 0.514 & 0.712 & 1.376 & 5.117 \\ 
  1.357 & Unweighted & 0.661 & 0.538 & 0.731 & 1.495 & 4.940 \\ 
   \hline
1.429 & Weighted & 0.720 & 0.587 & 0.632 & 1.354 & 4.287 \\ 
  1.429 & Unweighted & 0.743 & 0.615 & 0.653 & 1.480 & 4.140 \\ 
   \hline
1.500 & Weighted & 0.803 & 0.666 & 0.558 & 1.353 & 3.578 \\ 
  1.500 & Unweighted & 0.820 & 0.688 & 0.596 & 1.499 & 3.561 \\ 
   \hline
\end{tabular}
\endgroup
\caption{Simulation results for Model 5. Bolded values represent the better for a given $\lambda$ setting.} 
\label{tab:Model_5}
\end{table}
\end{document}